\documentclass{article}
\usepackage{iclr2026_conference,times}

\usepackage{amsmath,amsfonts,bm}

\def\eqref#1{equation~\ref{#1}}
\def\1{\bm{1}}

\DeclareMathAlphabet{\mathsfit}{\encodingdefault}{\sfdefault}{m}{sl}
\SetMathAlphabet{\mathsfit}{bold}{\encodingdefault}{\sfdefault}{bx}{n}

\newcommand{\Dinv}{D_{\psi_{\mathrm{inv}}}}

\newcommand{\alphainv}{\alpha_{\mathrm{inv}}}
\newcommand{\beff}{\beta_{\mathrm{eff}}}
\newcommand{\blow}{\beta_{\mathrm{low}}}
\newcommand{\bhigh}{\beta_{\mathrm{high}}}

\newcommand{\concc}{\mathrm{conc}(c)}
\newcommand{\rinv}{r_{\mathrm{inv}}}
\newcommand{\rlag}{r_{\mathrm{lag}}}
\newcommand{\rinve}{r_{\mathrm{inv},e}}
\newcommand{\rlage}{r_{\mathrm{lag},e}}
\newcommand{\Tinv}{T_{\mathrm{inv}}}
\newcommand{\Tlag}{T_{\mathrm{lag}}}
\newcommand{\Ttrust}{T_{\mathrm{trust}}}
\newcommand{\Tfocus}{T_{\mathrm{focus}}}

\newcommand{\zinv}[1]{z_{#1}^{I}}
\newcommand{\zlewm}[1]{z_{#1}^{L}}
\newcommand{\zinve}[2]{z_{#1}^{#2,I}}
\newcommand{\zlewme}[2]{z_{#1}^{#2,L}}

\newcommand{\zS}{z_S}

\newcommand{\abase}{a_{\mathrm{base}}}

\newcommand{\Hmacro}{H_{\mathrm{macro}}}
\newcommand{\Hsub}{H_{\mathrm{sub}}}

\usepackage{hyperref}
\usepackage{url}
\usepackage{amsmath}
\usepackage{amssymb}
\usepackage{amsthm}
\usepackage{booktabs}
\usepackage{graphicx}
\usepackage{placeins} % \FloatBarrier: keep Method figures within their subsections
\usepackage{enumitem}
\usepackage{tikz} % retained for the forthcoming regenerated system-overview figure

\theoremstyle{plain}
\newtheorem{theorem}{Theorem}[section]
\newtheorem{lemma}{Lemma}[section]

\theoremstyle{definition}

\theoremstyle{remark}
\newtheorem{remark}{Remark}[section]

\title{IMWM: Intuition Models Complement \\ World Models for Latent Planning}

\author{Baoqi Gao\textsuperscript{1,2}, \ Ruize Han\textsuperscript{2}, \ Miao Wang\textsuperscript{1}, \ Song Wang\textsuperscript{2} \\[4pt]
\textnormal{\textsuperscript{1}Beihang University \qquad
\textsuperscript{2}Shenzhen University of Advanced Technology}}

\iclrfinalcopy % arXiv version: reveal authors and drop the submission line-number ruler.

\begin{document}

\maketitle
\lhead{Preprint} % arXiv preprint: override the ICLR final-copy "Published as..." banner.

% IMWM paper section files. Old pre-rewrite files (introduction.tex,
% method.tex, etc., under sections/) are preserved on disk for internal
% reference but are not \input'd.
\begin{abstract}
Planning with a learned latent world model is a promising route to
control from raw pixels, but a strong world model alone is not enough.
We show this experimentally: even with a perfect world model
(operationalized by replacing the learned forward predictor with an
idealized rollout of the true environment dynamics), a finite-budget
sample-based planner still fails on some tasks, indicating that the
bottleneck can lie in search rather than in world-model accuracy.
Motivated by this gap, we propose \emph{IMWM} (\emph{Intuition Model +
World Model}), which pairs the world model with an intuition model
trained from demonstrations to recognize promising actions. The two
models collaborate through three lightweight components:
(i)~Retrieval Initialization, which initializes the planner's
action proposal from a retrieved demonstration; (ii)~Hybrid Cost,
which combines the intuition score with the world-model rollout cost;
and (iii)~a Reliability Gate, which adjusts how much the planner
trusts intuition in each setting. Across four pixel-based goal-reaching
tasks (Two-Room, Reacher, Push-T, and OGBench-Cube), IMWM has higher mean
success than the world-model-only planner on all four, with the largest gains
on Two-Room
($99.2\%$, $+11.5$ percentage points) and OGBench-Cube ($94.7\%$, $+28.5$
percentage points).
\end{abstract}

\section{Introduction}
\label{sec:intro}

Planning with a learned latent world model is a competitive route to
control from raw pixels: an encoder maps each image to a compact latent
state, a learned forward predictor rolls that state under candidate
actions, and a sample-based optimizer such as the Cross-Entropy
Method~\citep{rubinstein99cem} or its modern
variants~\citep{pinneri20icem,tdmpc} ranks action sequences under a
latent cost (e.g.\ terminal latent-MSE to a goal embedding). The lineage
runs from visual MPC~\citep{ebert18visualforesight} through the
PlaNet/Dreamer family~\citep{planet,dreamerv1,dreamerv2,dreamerv3} and
PETS~\citep{pets} to modern latent MPC such as TD-MPC~\citep{tdmpc}. The
default way to improve such a planner is to improve the world
model. This paper studies when that instinct is insufficient.

Planning has two ingredients: a model that predicts outcomes, and a
search that proposes which actions to try. We show \emph{experimentally}
that the search can be the binding constraint. We operationalize a
perfect world model by replacing the learned forward predictor with an
idealized rollout of the true environment dynamics, while keeping the
terminal latent-MSE cost and the original CEM budget fixed. Even so, on these
12-cell diagnostic grids, this oracle-dynamics planner still trails IMWM by $17.9$~pp on
OGBench-Cube and $13.7$~pp on Two-Room, and its failures are \emph{search} failures:
$127/139$ failed OGBench-Cube episodes and $86/87$ failed Two-Room episodes
contain \emph{zero} goal-reaching candidates in the CEM population at both logged
replans. The bottleneck is where the finite queries land, not how the world model
predicts. We also prove this theoretically: under finite CEM queries
against any black-box latent cost, the planner's success probability
obeys a proposal-volume bound that is independent of how the predictor
was trained (Theorem~\ref{thm:bottleneck}, Appendix~\ref{app:formal}).

\begin{figure}[t]
\centering
\includegraphics[width=\textwidth]{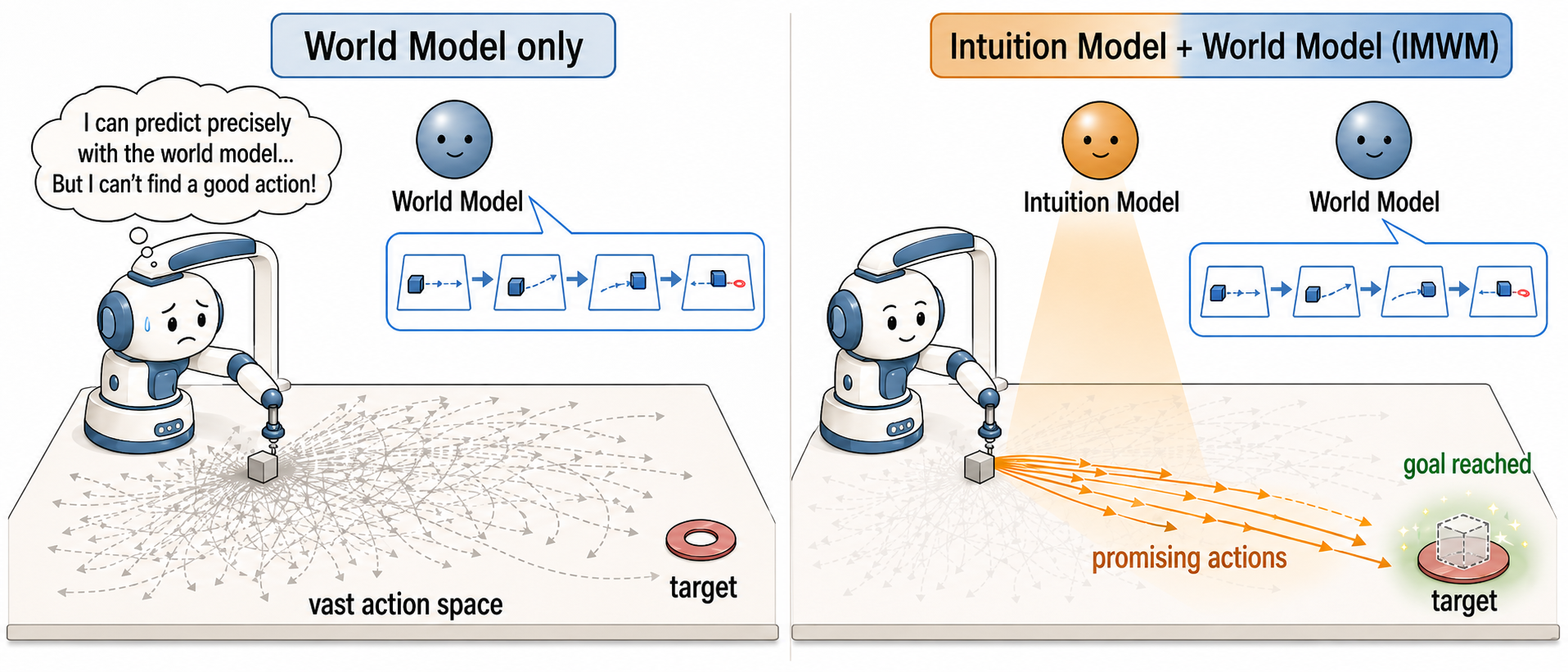}
\caption{\textbf{A world model alone is not enough.} \emph{Left (World Model only):}
the planner rolls out candidate actions accurately, but under a finite search
budget it cannot find a goal-reaching action in the vast action space.
\emph{Right (IMWM):} pairing the same frozen world model with a frozen
\emph{intuition model} concentrates the search on the promising actions and
reaches the goal. Both regimes use the same frozen world model and the same CEM
budget; only the proposal, cost, and gating differ.}
\label{fig:hero}
\end{figure}

If the search is what fails, it needs better guidance about which
actions are worth trying (Figure~\ref{fig:hero}). We take loose inspiration from how biological
decision-making works: not as a single forward simulator, but as several
functional motifs whose influence is regulated rather than fixed. These motifs
include arbitration between control systems~\citep{daw05,lengyel07},
memory-driven retrieval and predictive
maps~\citep{mattar18,pfeiffer13,stachenfeld17}, and hierarchical action
chunking~\citep{dezfouli13}. We make no claim about biological mechanism; we use these motifs
only as a design vocabulary, with established machine-learning analogues
in episodic control~\citep{blundell16mfec,pritzel17nec}, decision-time
retrieval and imitation~\citep{goyal22rarl,qi22implant}, behavior and
trajectory priors~\citep{pertsch20spirl,tirumala22bp,janner22diffuser},
and value-aware model
learning~\citep{farahmand17vaml,lambert20_objective_mismatch}.
\S\ref{sec:motivation} positions IMWM against this work, and
Appendix~\ref{app:extended-positioning} gives the full ledger.

We propose \emph{IMWM} (\emph{Intuition Model + World Model}), in which an
intuition model complements rather than replaces the world model. The
intuition model $\Dinv$ is a contrastive scalar score over (start
latent, goal latent, action chunk) tuples, trained on demonstration
windows with an InfoNCE-style objective~\citep{vandenoord18cpc} to
recognize promising actions. Its encoder and scorer feed three
lightweight collaboration components.
\emph{(i) Retrieval Initialization} centers the
CEM proposal on the action chunk retrieved by nearest-neighbor lookup in
the intuition encoder's latent space, biasing queries toward
demonstration-supported regions. \emph{(ii) A Hybrid Cost}
combines the intuition score with the world-model rollout cost.
\emph{(iii) A Reliability Gate} adjusts, per setting, how much the
planner trusts intuition versus falling back to the world model alone.
The algorithmic family and all trained artifacts are fixed across
experiments, under frozen thresholds; the only per-setting decision is
which of three pre-specified recipes the gate selects, so improvements
cannot come from per-task tuning (\S\ref{sec:method}).

\paragraph{Contributions.}
\begin{enumerate}
\item \textbf{Diagnosis of finite-CEM latent-WM planning.} We show
\emph{experimentally}, with an oracle-dynamics ablation that replaces the
learned predictor with idealized true-dynamics rollout under the original
CEM budget, that an idealized world model still fails on OGBench-Cube and
Two-Room because the search, not the predictor, is the binding
constraint (\S\ref{sec:diagnosis-oracle}); we also prove a matching
predictor-independent proposal-volume bound
(Theorem~\ref{thm:bottleneck}, Appendix~\ref{app:formal}).
\item \textbf{The IMWM method.} We design and train an \emph{intuition model}
that recognizes promising actions, and integrate it with
the world model through three components: Retrieval
Initialization, a Hybrid Cost, and a Reliability Gate
(\S\ref{sec:method}).
\item \textbf{Empirical gains on four pixel-based goal-reaching tasks.}
Over a 48-cell fresh-seed grid (12 cells $\times$ 4 tasks), IMWM achieves
higher mean success than the world-model-only planner on all four tasks:
Two-Room $99.2\%$ ($+11.5$~pp, $12/0/0$ W/T/L), OGBench-Cube $94.7\%$
($+28.5$~pp, $12/0/0$), Push-T $92.7\%$ ($+2.8$~pp, $8/1/3$), and Reacher
$83.8\%$ (a near-tie, $+0.7$~pp, $7/1/4$), where the gate routes to the
forward-only fallback (\S\ref{sec:exp-main}).
\end{enumerate}

\paragraph{Roadmap.}
\S\ref{sec:diagnosis} establishes the finite-query diagnosis;
\S\ref{sec:motivation} positions IMWM against related work;
\S\ref{sec:method} defines IMWM; \S\ref{sec:experiments}
reports main results, routing, and ablations; and
\S\ref{sec:limitations} and \S\ref{sec:conclusion} discuss limitations
and conclude.
\section{Diagnosis: where latent world-model planning fails}
\label{sec:diagnosis}

The introduction framed planning as two ingredients (a model that predicts
and a search that proposes) and claimed they are separate failure surfaces.
Here we ask the sharp version: assuming the world model is correct, can a
finite-budget planner still fail? It can, and we show this
\emph{experimentally}. We swap the learned forward predictor for the
\emph{true} environment dynamics while holding everything else fixed. The
resulting oracle-dynamics planner barely improves: it even underperforms the
world-model-only baseline on one task, still trails IMWM, and its failures are
\emph{search} failures rather than prediction failures
(\S\ref{sec:diagnosis-oracle}). The same conclusion holds in the worst case as a
theorem: finite black-box search is volume-limited for reasons independent of
how the predictor was trained. We state and prove it in
Appendix~\ref{app:formal} (Theorem~\ref{thm:bottleneck}).

\subsection{Oracle dynamics: an idealized world model still fails}
\label{sec:diagnosis-oracle}

To test whether the search, rather than the forward predictor, is the limiting
component in practice, we replace the learned predictor with literal environment
rollout while holding everything else fixed: the same encoder, terminal
latent-MSE cost, CEM budget, replanning cadence, and evaluation cells
(configuration in Appendix~\ref{app:cem-hyperparams}). We call this
\emph{oracle dynamics}: it changes only the latent dynamics used in the cost,
not the cost or the budget.

% NOTE: oracle / world-model-only / IMWM columns below are now full 12-cell
% (ds,ss) grid means with the SAME sourcing as the Sec. 5 headline table, so the
% world-model-only (87.7 / 66.2) and IMWM (99.2 / 94.7) columns equal the Sec. 5
% headline means. Oracle = mean over the 12 oracle-dynamics cells per task.
% Source: paper/iclr2026/oracle_lewm_fullgrid.json (12/12 cells, both tasks).
\begin{table}[h]
\centering
\caption{Oracle dynamics versus the world-model-only baseline (provenance tag
\emph{c1}) and IMWM, all under the same finite CEM budget. Oracle dynamics
replaces the learned forward predictor with literal environment rollout in the
terminal latent-MSE cost; everything else is unchanged. \emph{Zero-success
failed eps} counts failed episodes for which every logged replan had zero
goal-reaching candidates in its CEM population. Each task is the mean over its
full 12-cell $(\mathrm{ds},\mathrm{ss})$ grid; the world-model-only and IMWM
columns therefore coincide with the headline means of \S\ref{sec:experiments}.}
\label{tab:diagnosis-oracle}
\small
\resizebox{\textwidth}{!}{%
\begin{tabular}{lccccc}
\toprule
Task & Oracle dynamics & World-model only & IMWM & $\Delta$ (Oracle$-$IMWM) & Zero-success failed eps \\
\midrule
Two-Room (12-cell mean)     & 85.5 & 87.7 & 99.2 & $-13.7$\,pp & 86\,/\,87 (98.9\%) \\
OGBench-Cube (12-cell mean) & 76.8 & 66.2 & 94.7 & $-17.9$\,pp & 127\,/\,139 (91.4\%) \\
\bottomrule
\end{tabular}}
\end{table}

Table~\ref{tab:diagnosis-oracle} tells a consistent story. An exact forward
predictor \emph{underperforms} the world-model-only baseline on Two-Room
($87.7 \to 85.5$) and helps only modestly on OGBench-Cube ($66.2 \to 76.8$), and
in both cases still trails IMWM by $13.7$\,pp and $17.9$\,pp; an exact predictor
inside the original budget is no substitute for IMWM. The mechanism column
localizes why: failures are search failures. Among failed episodes, $98.9\%$ on
Two-Room ($86/87$) and $91.4\%$ on OGBench-Cube ($127/139$) had \emph{zero}
goal-reaching candidates in the CEM population at both replans, and every failed
episode on both tasks had at least one zero-candidate replan. When a replan's population does contain goal-reaching candidates, by contrast,
they make up $\sim$93--95\% of it, and the terminal latent-MSE objective almost
always ranks one of them first (Appendix~\ref{sec:exp-diag-rank-success}). This is
exactly the predictor-independent bottleneck of Theorem~\ref{thm:bottleneck}
showing up empirically, and it is the search-side gap that the method of
\S\ref{sec:method} is built to close.

\paragraph{Scope.}
This is a two-task experiment through a single oracle adapter. On these
diagnostic cells, it rules out the specific explanation that the forward
predictor's accuracy is the bottleneck under the original budget and cost; it is
not a claim about arbitrary perfect models, alternative costs, or larger budgets. The
full CEM-budget configuration is in Appendix~\ref{app:cem-hyperparams}, and
Push-T is excluded by a simulator-physics confound documented in
Appendix~\ref{app:pusht-quarantine}; Reacher is excluded because its
world-model-only baseline is already competitive (a paired near-tie of
$+0.7$\,pp; \S\ref{sec:experiments}). The few failures that do not fit the
both-replans-zero pattern ($12$ of $139$ on OGBench-Cube, $1$ of $87$ on
Two-Room) are the cases where a goal-reaching candidate appeared in some replan
yet was not selected.

\section{Related work and positioning}
\label{sec:motivation}

IMWM draws on several established lines of work, but its contribution is a
specific \emph{combination}, not any single ingredient. We position it
against the closest lines here; the full related-work ledger and the
cognitive-science motifs that inspired pairing an intuition model with a
world model are in Appendix~\ref{app:extended-positioning}.

\paragraph{What is new.}
IMWM's novelty is this combination, realized inside a single finite-budget
planner: (i) a frozen latent world model with terminal latent-MSE as one
cost term, (ii) a separately trained contrastive (start, goal, action)
compatibility score as a second cost term, (iii) cosine retrieval of a
demonstration action chunk to center the CEM proposal, and (iv) a per-cell
reliability gate that selects among these ingredients \emph{before any query
is spent}. The diagnosis of \S\ref{sec:diagnosis} identifies the precise
search bottleneck this combination is built to attack.

\paragraph{Positioning against the closest lines.}
\emph{Sampling-based planners.} iCEM~\citep{pinneri20icem} improves CEM
through colored-noise action correlation and elite memory across replans;
IMWM instead conditions the proposal on a demonstration-retrieved mean and a
learned compatibility cost, a different and complementary axis. Modern latent
MPC such as TD-MPC~\citep{tdmpc} learns a task-oriented dynamics model and a
terminal value; IMWM keeps the world model frozen and reward-free, adding the
intuition signal only as a cost term.
\emph{Contrastive goal-conditioned value.} Contrastive RL~\citep{eysenbach22contrastiverl}
interprets a contrastive start-goal-action score as a goal-conditioned value
function; IMWM uses an analogous score not as a value but as a planner-side
compatibility cost inside finite-budget CEM.
\emph{Episodic and retrieval-based control.} MFEC and
NEC~\citep{blundell16mfec,pritzel17nec} retrieve stored \emph{values} for
$\arg\max$ action selection; IMWM retrieves \emph{action chunks} to initialize
the search distribution, with no value memory.
\emph{Value-aware model learning.} Objective-mismatch and value-aware
losses~\citep{farahmand17vaml,lambert20_objective_mismatch} reshape the
\emph{training} objective so that prediction better serves control; our
diagnosis is orthogonal: it exposes a \emph{predictor-independent},
finite-query search bottleneck that better training alone cannot remove.

\section{Method: IMWM}
\label{sec:method}

IMWM augments standard latent world-model planning with a frozen
\textbf{intuition model}: an inverse-side encoder $\phi^I$ and a bilinear
scorer $\Dinv$ over (start, goal, action) triples, jointly trained on
demonstration windows by an InfoNCE-style objective~\citep{vandenoord18cpc}
and frozen at evaluation. The planner consumes the intuition model through
\emph{three components}: Retrieval Initialization centers the search
proposal on a retrieved demonstration chunk, Hybrid Cost combines the
intuition score with the world-model rollout cost, and a Reliability
Gate decides per cell how much to rely on the intuition model versus falling
back to the world model alone. The empirical gains we report are for this
composed, gated planner; the ablations in \S\ref{sec:experiments} separate the
contributions of these components while leaving the main empirical claim on the
full gated method. The algorithmic family, trained artifacts, and retrieval banks are fixed before
evaluation. The gate's only per-cell decision is which of three pre-specified
recipes to use, so improvements cannot come from per-task tuning. Figure~\ref{fig:arch} shows the
overall architecture.

\begin{figure}[t]
\centering
\includegraphics[width=\textwidth]{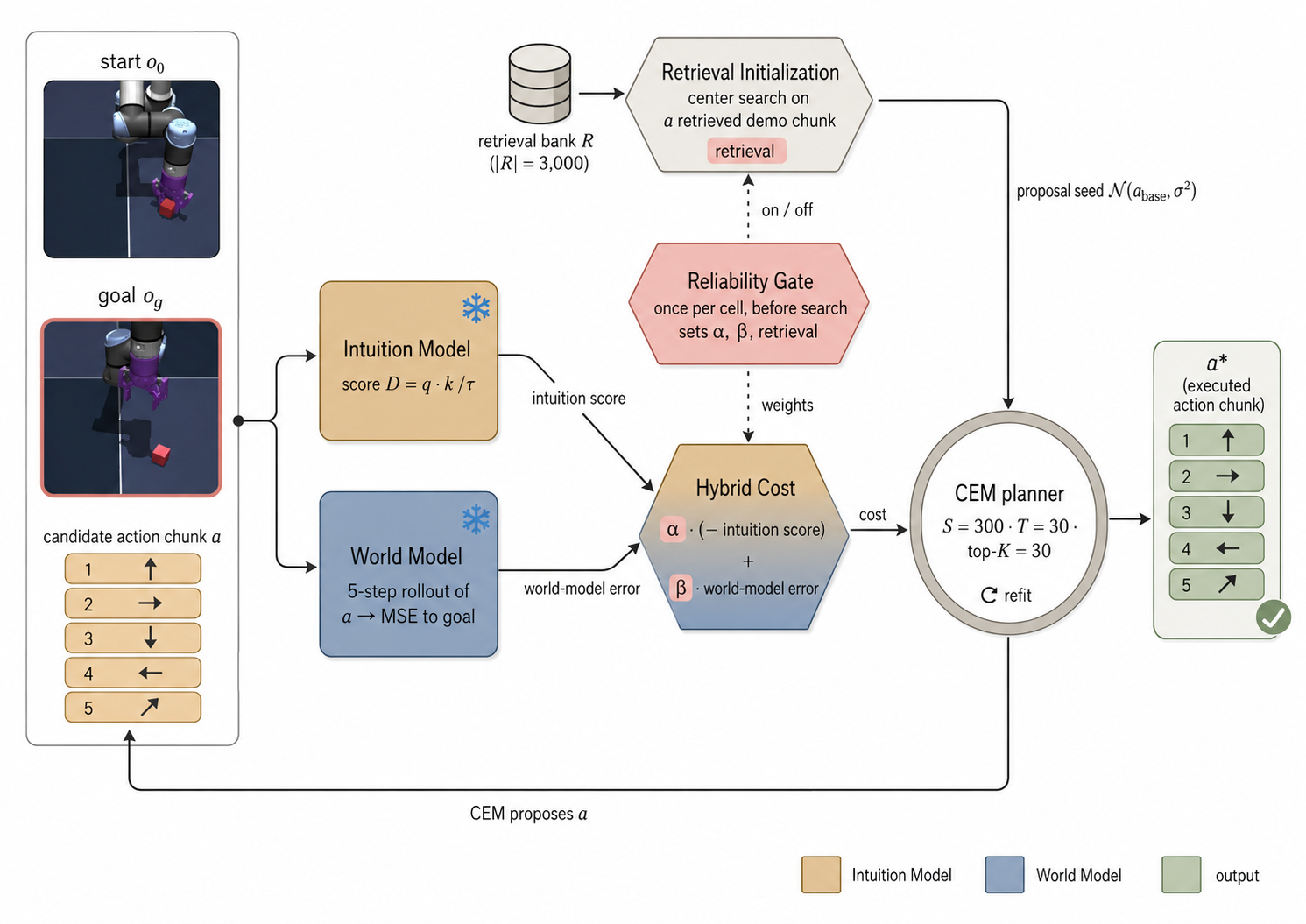}
\caption{\textbf{IMWM architecture.} Two frozen models encode the start/goal
observations and score a candidate action chunk: the \emph{Intuition Model}
(compatibility $D = q\cdot k/\tau$) and the \emph{World Model} (a 5-step latent
rollout scored by terminal MSE to the goal). Their outputs are combined by the
\emph{Hybrid Cost} $J$, a weighted sum (weights $\alpha,\beta$) of the
$z$-scored \emph{negative} intuition score ($-D$) and the $z$-scored world-model
error.
\emph{Retrieval Initialization} seeds the CEM proposal from a demonstration
bank, and the \emph{Reliability Gate} sets the per-cell recipe (retrieval
on/off and the weights $\alpha,\beta$). For each CEM candidate the Hybrid Cost
queries both frozen models; CEM returns the executed action chunk $a^{*}$. The
start/goal panels are real OGBench-Cube observations.}
\label{fig:arch}
\end{figure}

\FloatBarrier

\subsection{Setup and the intuition model}
\label{sec:method-setup}

We consider goal-reaching MDPs over pixel observations, evaluated in
\emph{cells} $c = (\mathrm{task}, ds, ss)$ of $E = 50$ episodes. Each
environment $e$ provides start/goal observations $(o_0^e, o_g^e)$, encoded by
two frozen encoders: the intuition encoder $\phi^I$ producing
$(\zinve{0}{e}, \zinve{g}{e})$ in $z^I$-space, and the world model (instantiated
with the LeWM stack of~\citet{lewm}, run on the \texttt{stable-worldmodel} platform~\citep{swm}) producing $(\zlewme{0}{e}, \zlewme{g}{e})$
in $z^L$-space with an $\Hmacro{=}5$-step latent rollout. The
\emph{world-model-only baseline} is this stack planned by Cross-Entropy Method
(CEM) without the intuition model. All planners we compare share the same frozen
world model and CEM budget and differ only in proposal, cost, and per-cell
conditioning. The intuition scorer is bilinear,
$\Dinv(\zinve{0}{e}, \zinve{g}{e}, a) = q(\zinve{0}{e}, \zinve{g}{e})\cdot k(a)/\tau$;
larger values mean the chunk $a$ is more compatible with the start--goal pair
(Figure~\ref{fig:scoring}); the scorer is trained per task with an InfoNCE
objective (Figure~\ref{fig:training}).
Architecture, the InfoNCE objective, and bank/training details are in
Appendix~\ref{app:components}.

\begin{figure}[tbp]
\centering
\includegraphics[width=0.86\textwidth]{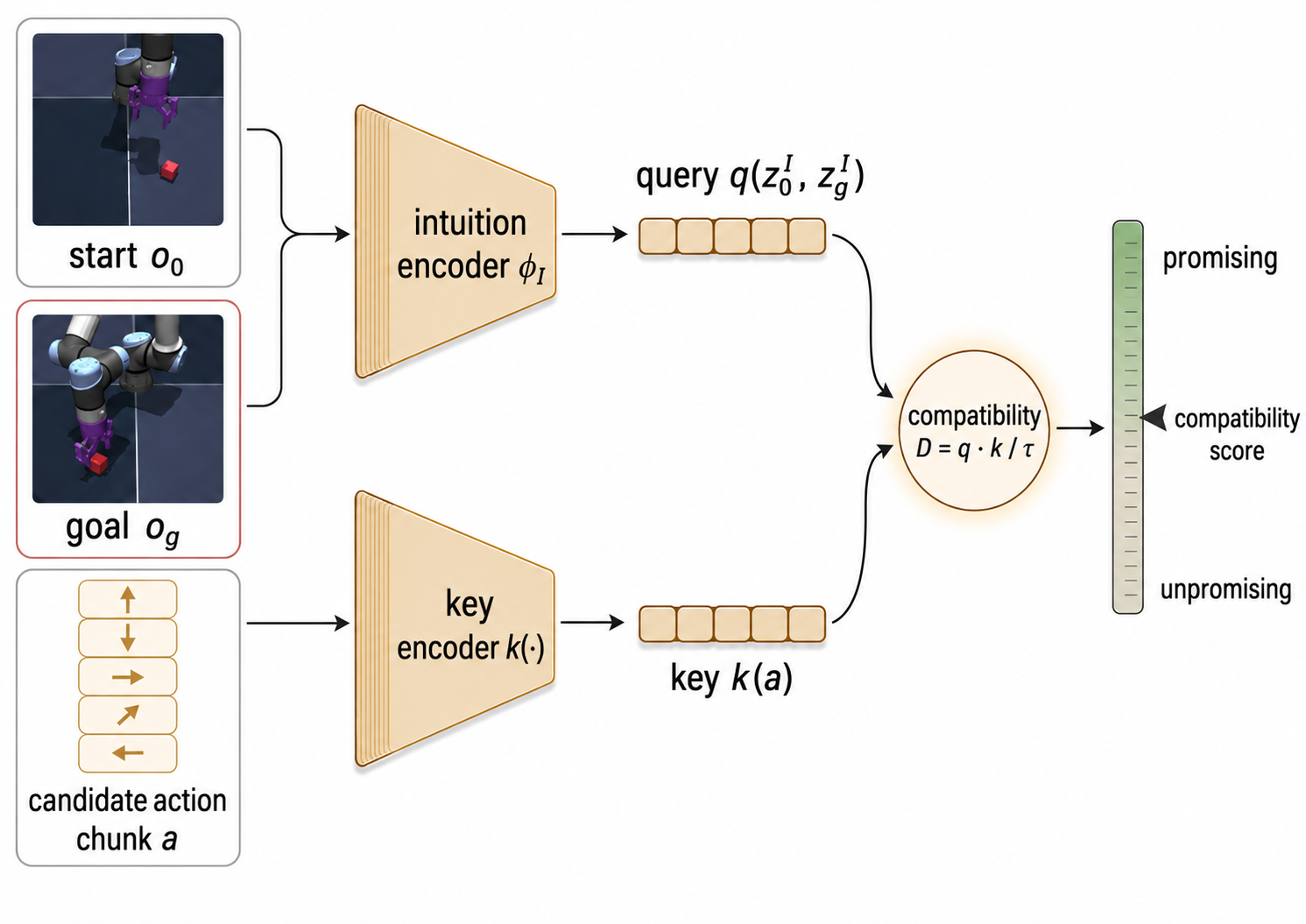}
\caption{\textbf{Intuition-model scoring.} The (start, goal) observations are
encoded into a query $q(z_0^{I}, z_g^{I})$ and a candidate action chunk into a
key $k(a)$; their compatibility $D = q\cdot k/\tau$ scores how well the action
fits the start$\to$goal transition (higher is more promising). The start/goal
panels are real OGBench-Cube observations; the action chunk is drawn
schematically.}
\label{fig:scoring}
\end{figure}

\begin{figure}[tbp]
\centering
\includegraphics[width=0.80\textwidth]{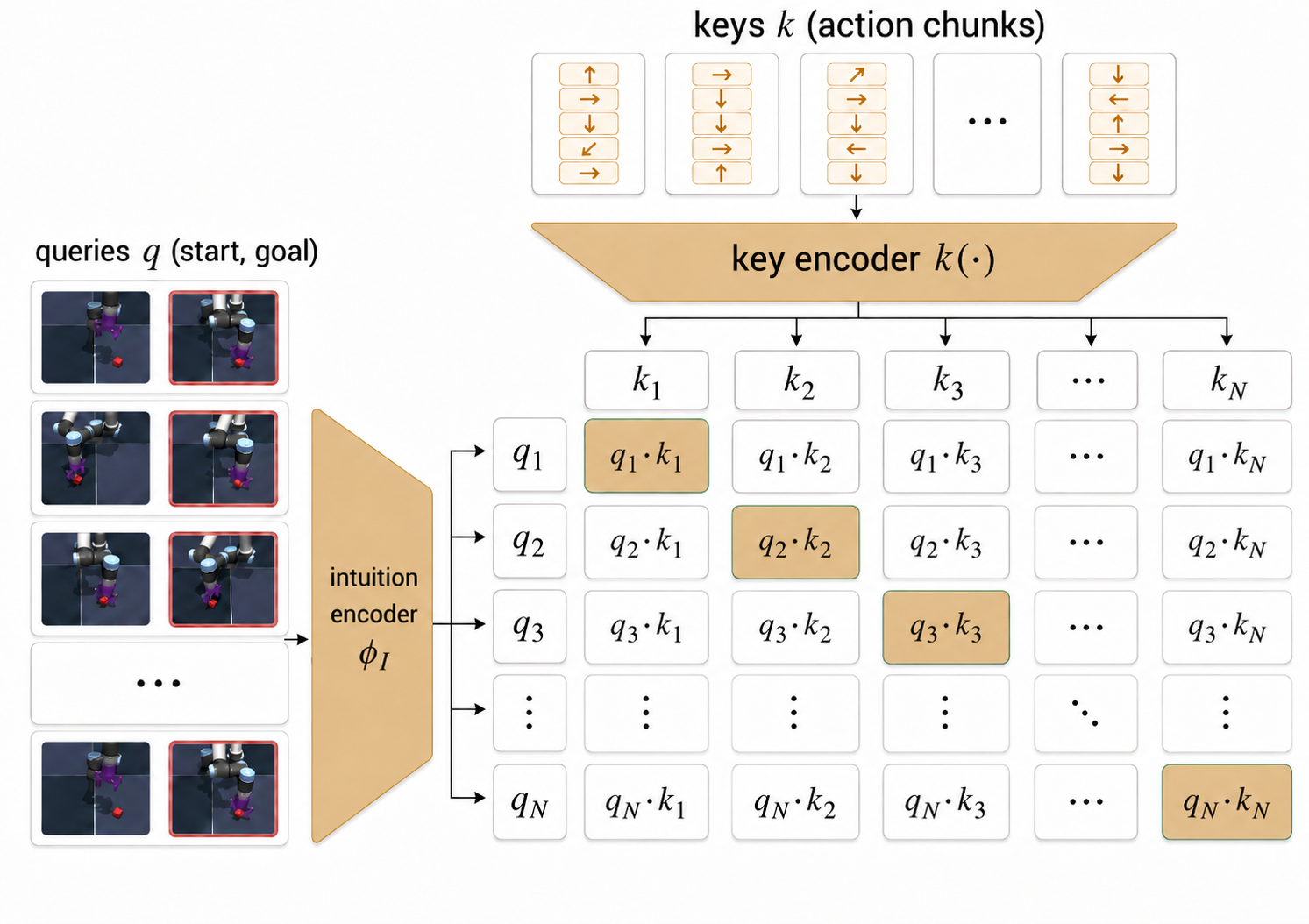}
\caption{\textbf{Intuition-model training (per task, CLIP-style).} Rows are
start--goal queries and columns are schematic action-chunk keys for one
OGBench-Cube task-specific batch. Diagonal cells pair each query with its own
action chunk and are the InfoNCE positives; off-diagonal cells pair the query
with other same-task action chunks and are in-batch negatives. The real
OGBench-Cube thumbnails are illustrative observation examples; they are not
claimed to be the exact training-bank samples.}
\label{fig:training}
\end{figure}

\paragraph{Why the intuition model complements the world model.}
The intuition model is deliberately a counterpart, not a replacement, to the
forward predictor, and four choices keep them complementary.
\emph{(i)} It is \emph{inverse-conditional}: it scores actions given
start--goal evidence rather than predicting the next latent.
\emph{(ii)} It is trained on \emph{demonstration windows}, not the
$(s,a,s')$ transitions the world model trains on.
\emph{(iii)} It returns a \emph{scalar compatibility score}, not a latent
prediction, so it does not compound rollout error.
\emph{(iv)} It lives in a \emph{separate latent space} $z^I$, decoupling the
intuition representation from the forward-prediction one. These choices echo
inverse-dynamics representation learning~\citep{pathak17icm} and contrastive
goal-conditioned scoring~\citep{eysenbach22contrastiverl}; the precise
mechanism differences from each are in Appendix~\ref{app:extended-positioning}.

\subsection{Three components}
\label{sec:method-components}

\paragraph{(i) Retrieval Initialization.}
Following the diagnosis (\S\ref{sec:diagnosis}), IMWM injects prior information
about plausible actions \emph{before} the cost rules them out, by centering the
CEM proposal on a retrieved demonstration chunk. From a fixed bank
$R = \{(k_0^{(i)}, k_g^{(i)}, a^{(i)})\}_{i=1}^{|R|}$ of cosine-normalized
demonstration start/goal keys and action chunks ($|R| = 3{,}000$ per
task/data-seed; built once, Appendix~\ref{app:retrieval-bank}), the planner
retrieves
\begin{equation}
\label{eq:retrieval-init}
\abase^{(e,t)} = a^{(i^*)}, \quad
i^* = \arg\max_{i}\, \mathrm{cos}\!\big([\zinve{t}{e}; \zinve{g}{e}],\,
[k_0^{(i)}; k_g^{(i)}]\big),
\end{equation}
and initializes the iter-$0$ CEM candidates as an isotropic Gaussian
$\mathcal{N}(\abase^{(e,t)}, \sigma^2 I)$ with $\sigma = 1$, sampled IID across
the $S = 300$ candidates and chunk dimensions, so there is no temporal
(colored-noise) correlation; the only injected structure is the retrieved mean.
When the gate disables retrieval, the proposal centers on the zero action chunk,
recovering the standard world-model-only CEM proposal.

\paragraph{(ii) Hybrid Cost.}
The candidate cost composes the intuition score with the world-model rollout
error:
\begin{equation}
\label{eq:hybrid-cost}
J(a; e, t) = \alphainv \cdot \zS\!\big(-\Dinv(\zinve{t}{e}, \zinve{g}{e}, a)\big)
\;+\; \beff \cdot \zS\!\big(\mathrm{MSE}(\mathrm{rollout}_L(\zlewme{t}{e}, a), \zlewme{g}{e})\big),
\end{equation}
where $\mathrm{rollout}_L$ is the $\Hmacro{=}5$-step world-model rollout and
$\mathrm{MSE}$ is the terminal latent-MSE to the goal. The two terms live in
different latent spaces on incommensurate scales, so each is $z$-scored
($\zS$) across the CEM candidate set before composition (formula in
Appendix~\ref{app:iwm-procedure}). The weights $(\alphainv, \beff)$ are not
free hyperparameters; the gate sets them from one of three recipes, and
$\alphainv = 0$ reduces $J$ to the world-model-only cost.

\paragraph{(iii) Reliability Gate.}
\label{sec:method-gate}
The intuition signal is not universally useful, so IMWM decides \emph{per cell,
before any planning query is spent}, how much to rely on it. At the first
replan it computes two diagnostics from the intuition scorer applied to the
retrieved chunk $\abase^{(e,0)}$ versus a fixed set of random ``neutral''
chunks: $\rinve$, the retrieved chunk's score margin over a high
($q_{95}$) percentile of the neutral scores, normalized by their dispersion;
and $\rlage$, the lag-1 autocorrelation of the retrieved chunk (a smoothness
diagnostic, not a sampling parameter). The cell features are the medians
$\rinv(c), \rlag(c)$ over environments (exact formulas in
Appendix~\ref{app:iwm-procedure}). Frozen thresholds
$(\Tinv, \Tlag) = (0.05, 0.3)$ select one of three recipes
$(\alphainv, \beff, \text{retrieval})$:
\[
\mathrm{recipe}(c) =
\begin{cases}
(1,\, 3,\, \text{on}), & \rinv(c) > \Tinv,\ \rlag(c) > \Tlag
  \quad\text{(trusted hybrid)}, \\[2pt]
(1,\, 0.1,\, \text{on}), & \rinv(c) > \Tinv,\ \rlag(c) \le \Tlag
  \quad\text{(intuition-dominant)}, \\[2pt]
(0,\, 1,\, \text{off}), & \rinv(c) \le \Tinv
  \quad\text{(forward-only fallback)}.
\end{cases}
\]
In the fallback recipe the intuition model is switched off entirely and the
planner reverts to the world-model-only configuration under the same budget
(up to solver-seed variation). The
thresholds were frozen before the 48-cell headline evaluation; the calibration
protocol and cell overlap are reported in \S\ref{sec:experiments}. A continuous relaxation of
these discrete weights is analyzed (not deployed) in
Appendix~\ref{app:conc-mode-defs}.

\subsection{Cell-level execution}
\label{sec:method-pseudocode}

Per cell, IMWM runs the diagnostic pre-pass for every environment, takes the
medians, and fixes $\mathrm{recipe}(c)$. Then, per episode, it plans
closed-loop: at each replan it (re-)centers the proposal via
\eqref{eq:retrieval-init} if retrieval is on, runs CEM ($S = 300$ candidates,
$T = 30$ iterations, top-$K = 30$ elites) under the hybrid
cost~\eqref{eq:hybrid-cost} with the recipe's weights, and executes the final
elite-mean chunk. The full procedure, neutral-chunk diagnostic algebra, and
compute footprint (IMWM matches the world-model-only CEM sampling budget; the
added intuition cost is small relative to the rollout) are in
Appendix~\ref{app:iwm-procedure}.

\section{Experiments}
\label{sec:experiments}

\subsection{Setup}
\label{sec:exp-setup}

We evaluate IMWM against the \emph{world-model-only baseline} (the same frozen
LeWM stack planned by CEM, no intuition model) on four pixel-based
goal-reaching tasks: \textbf{Two-Room} (locomotion to a goal), \textbf{Reacher}
(end-effector reaching), \textbf{Push-T} (T-block manipulation), and
\textbf{OGBench-Cube} (block pushing); Figure~\ref{fig:tasks} shows, for each
task, a start observation and the successful terminal observation. The headline grid is \textbf{12 cells per task} (48
total): six data seeds $ds \in \{3,5,7,9,11,13\}$, with solver-seed
replication $ss \in \{42,1,2\}$ on $ds \in \{3,5,7\}$, each cell $E = 50$
episodes. Cells are evaluated \emph{paired}: IMWM and the baseline see the same
episodes under the same per-replan CEM budget ($S = 300$, $T = 30$,
top-$K = 30$), so the comparison equalizes sampling budget, not wall-clock
(approximate logged timings, with their batching caveat, are in
Appendix~\ref{app:runtime-audit}). We report cell-level success rate, the mean
paired delta $\bar{\Delta}$ (IMWM $-$ baseline), and paired wins/ties/losses
(tie iff $|\Delta| < 2$~pp).

\begin{figure}[t]
\centering
\includegraphics[width=\textwidth]{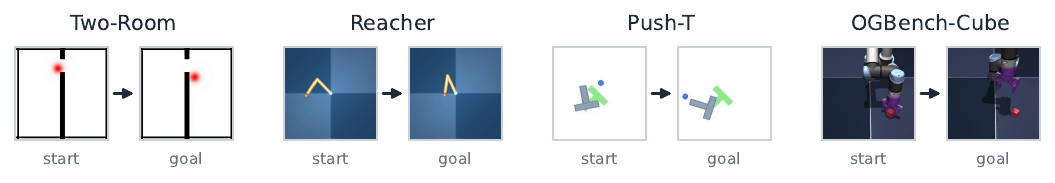}
\caption{\textbf{The four pixel-based goal-reaching tasks.} For each task we show
two real evaluation frames: the start observation $o_0$ and the
benchmark-provided goal observation $o_g$ (the goal image the planner is
given), from a representative episode (cell \texttt{ds3\_ss42}).
\emph{Two-Room}: traverse a doorway to a goal in the
adjacent room; \emph{Reacher}: move a jointed arm's end-effector to a target;
\emph{Push-T}: align a T-block to a target pose; \emph{OGBench-Cube}: push a
block to a target. Frames are rendered by the actual simulators (the planner
observes $224\times224$ pixels).}
\label{fig:tasks}
\end{figure}

\subsection{Main results}
\label{sec:exp-main}

Table~\ref{tab:headline} reports the 12-cell-per-task totals. IMWM
substantially improves over the world-model-only baseline on Two-Room
($99.2\%$, $+11.5$~pp, $12/0/0$) and OGBench-Cube ($94.7\%$, $+28.5$~pp,
$12/0/0$), with paired-cell bootstrap $95\%$ CIs well above zero; yields a small
but positive improvement on Push-T ($92.7\%$, $+2.8$~pp, $8/1/3$; CI excludes
zero but its lower bound is below $+2$~pp); and is a near-tie on Reacher
($83.8\%$, $+0.7$~pp, $7/1/4$; CI crosses zero), where the gate routes to the
forward-only fallback recipe (\S\ref{sec:exp-routing}), leaving IMWM
statistically tied with the baseline there (the $+0.7$~pp is run-to-run CEM variation under the
shared fallback recipe). The two large gains occur on the tasks where our diagnostics show
the clearest search limitation; the per-cell paired scatter is in
Appendix~\ref{app:extra-results} (Figure~\ref{fig:paired-scatter}).

\begin{table}[t]
\centering
\small
\caption{Fresh-seed 12-cell totals (success rate, \%; $n{=}12$ cells/task).
$\bar{\Delta}$ is the mean cell-level paired delta (IMWM $-$ baseline); $95\%$
CIs are paired-cell bootstrap intervals over the evaluated grid
($B{=}10{,}000$). W/T/L tie band is $|\Delta| < 2$~pp; at $50$ episodes/cell
this reduces to $\Delta = 0$.}
\label{tab:headline}
\begin{tabular}{l c c r c c}
\toprule
                  & \textbf{IMWM}   & \textbf{World-model only} & $\bar{\Delta}$ & 95\% CI (pp)            & paired \\
\textbf{Task}     & mean $\pm$ SD  & mean $\pm$ SD             & (pp)           & (paired-cell bootstrap) & W/T/L \\
\midrule
Two-Room      & $99.2 \pm 1.0$ & $87.7 \pm 4.7$ & $+11.5$ & $[+8.8, +14.5]$  & 12/0/0 \\
Reacher       & $83.8 \pm 5.7$ & $83.2 \pm 4.1$ & $+0.7$  & $[-2.7, +3.8]$   & 7/1/4 \\
Push-T        & $92.7 \pm 3.4$ & $89.8 \pm 4.8$ & $+2.8$  & $[+0.8, +4.7]$   & 8/1/3 \\
OGBench-Cube  & $94.7 \pm 1.9$ & $66.2 \pm 5.7$ & $+28.5$ & $[+25.3, +31.3]$ & 12/0/0 \\
\bottomrule
\end{tabular}
\end{table}

\subsection{The gate routes without per-task tuning}
\label{sec:exp-routing}

The reliability gate is computed once per cell from the diagnostic pre-pass and
held fixed. Under the frozen thresholds $(\Tinv, \Tlag) = (0.05, 0.3)$ it routes
all $24$ diagnostic cells (4 tasks $\times$ $ds \in \{3,5,7,9,11,13\}$ at
$ss{=}42$) to the pre-specified per-task recipe, with no cell straddling a
boundary: Two-Room to intuition-dominant, Reacher to the forward-only fallback,
and Push-T and OGBench-Cube to the trusted hybrid
(Figure~\ref{fig:routing-plane}; full per-task diagnostic ranges in
Appendix~\ref{app:extra-results}, Table~\ref{tab:routing}). Because the recipe
is selected automatically from the diagnostics rather than hand-set per task,
the headline gains are not the product of per-task tuning. A one-axis
perturbation audit on the 48 headline cells leaves all routing unchanged for
$|\delta\Tinv| < 0.0376$ and $|\delta\Tlag| < 0.3136$ (recipe-active cells); the
tight boundary is $\Tinv$ on Two-Room and Reacher
(Appendix~\ref{app:gate-margins}).

\begin{figure}[t]
\centering
\includegraphics[width=0.88\textwidth]{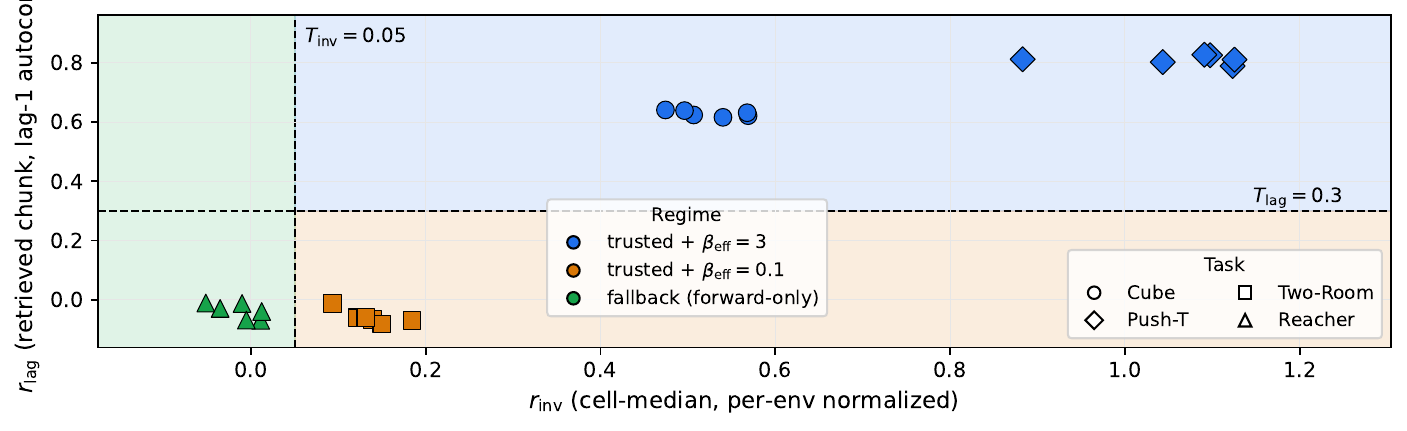}
\caption{Two-feature routing plane $(\rinv, \rlag)$: each point is one of the 24
diagnostic cells, colored by its routed recipe. Frozen
thresholds $(\Tinv, \Tlag) = (0.05, 0.3)$ are the axis-aligned partition lines;
the gate is the \emph{evaluated} routing mechanism, not an interpolated switch.}
\label{fig:routing-plane}
\end{figure}

\subsection{Ablation study}
\label{sec:exp-diag-intuition-elite}

We ablate the two component axes the gate controls (Retrieval Initialization
and the Hybrid Cost), and then ablate the gate itself.

\paragraph{Retrieval Initialization $\times$ Hybrid Cost.}
We cross the two design axes into a $2 \times 3$ grid
(Table~\ref{tab:ablation-grid}): Retrieval Initialization is either off (the
proposal centers on the zero chunk) or on, and the Hybrid Cost uses the
intuition term only ($\alphainv{=}1, \beff{=}0$), the world-model term only
($\alphainv{=}0, \beff{=}1$), or both. The world-model-only, retrieval-off cell
is the headline baseline, and the both-terms, retrieval-on cell is full IMWM;
when Retrieval Initialization is on or the Hybrid Cost uses both terms, the
Reliability Gate is the mechanism that selects the configuration per cell.
Two readings stand out. First, \emph{Retrieval Initialization is load-bearing on
the search-limited tasks}: with the world-model cost alone, turning retrieval on
lifts OGBench-Cube from $66.2$ to $88.3$ and Two-Room from $87.7$ to $98.2$,
matching the diagnosis that the bottleneck is proposal coverage
(\S\ref{sec:diagnosis}); on Reacher, whose world-model baseline is already
strong, retrieval instead hurts ($83.2 \to 64.8$), which is exactly why the gate
disables it there. Second, \emph{the intuition score is not a standalone
replacement}: the intuition-only cost trails full IMWM by a wide margin on
Reacher, Push-T, and OGBench-Cube, and the full configuration (retrieval on,
both terms) is best on every task.

\begin{table}[t]
\centering
\small
\caption{\textbf{Retrieval Initialization $\times$ Hybrid Cost} ($2 \times 3$
grid; mean cell success, \%, over the 12-cell-per-task grid). This grid is
\emph{gate-faithful} rather than a strict orthogonal factorial: the
\emph{both} weights are the gate's per-task choice and the \emph{(on, both)}
row, marked $^\dagger$, is the deployed full IMWM, whose recipe is gate-selected
per task. For Reacher the gate disables retrieval and selects
the forward-only recipe, so Reacher's \emph{both} entries reduce to its
world-model-only fallback and its full-IMWM value ($83.8$) is statistically tied
with the \emph{(off, world-model-only)} baseline ($83.2$) up to run-to-run CEM
variation. The \emph{(off, world-model-only)} row is the headline baseline (c1,
retrieval off); the \emph{(on, world-model-only)} row adds Retrieval
Initialization to the world-model cost (c2). Per-cell paired CIs in
Appendix~\ref{app:extra-results}, Table~\ref{tab:diag2-full}.}
\label{tab:ablation-grid}
\begin{tabular}{ll cccc}
\toprule
Retrieval & Hybrid Cost & Two-Room & Reacher & Push-T & OGBench-Cube \\
\midrule
off & intuition-only   & $96.2$ & $63.0$ & $54.2$ & $73.5$ \\
off & world-model-only & $87.7$ & $83.2$ & $89.8$ & $66.2$ \\
off & both             & $98.8$ & $83.2$ & $92.3$ & $76.8$ \\
\midrule
on  & intuition-only   & $98.3$ & $53.2$ & $60.0$ & $83.8$ \\
on  & world-model-only & $98.2$ & $64.8$ & $91.5$ & $88.3$ \\
on$^\dagger$  & both$^\dagger$ & $\mathbf{99.2}$ & $\mathbf{83.8}$ & $\mathbf{92.7}$ & $\mathbf{94.7}$ \\
\bottomrule
\end{tabular}
\end{table}

\paragraph{Reliability Gate.}
The gate routes per cell to one of three recipes; to show this adaptivity is
necessary we instead \emph{fix} each recipe and evaluate it on all four tasks,
as in a hyperparameter grid search (Table~\ref{tab:ablation-gate}). No single
fixed recipe is best everywhere: the forward-only fallback $(0,1,\text{off})$ is
best on Reacher but collapses on OGBench-Cube; the trusted hybrid
$(1,3,\text{on})$ is best on OGBench-Cube and Push-T; and the
intuition-dominant recipe $(1,0.1,\text{on})$ is best on Two-Room. The
appropriate configuration differs across tasks, and the Reliability Gate selects
each task's best fixed recipe automatically from the diagnostic pre-pass,
matching or statistically tying the per-task best without per-task tuning (on
Reacher the gate's $83.8$ and the fixed forward-only $83.2$ are tied up to
run-to-run CEM variation under the shared fallback recipe).

\begin{table}[t]
\centering
\small
\caption{\textbf{Reliability-Gate ablation} (mean cell success, \%): each fixed
recipe $(\alphainv, \beff, \text{retrieval})$ evaluated on all four tasks versus
the adaptive gate. \textbf{Bold} marks the column maximum, so the co-best fixed
recipes are bolded alongside the adaptive gate; on Reacher the adaptive gate is
the column maximum alone (the fixed forward-only $83.2$ is a near-tie with the
gate's $83.8$ up to run-to-run CEM variation). The best fixed recipe differs
across columns, and the adaptive gate matches or statistically ties each
column's best.}
\label{tab:ablation-gate}
\begin{tabular}{l cccc}
\toprule
Configuration & Two-Room & Reacher & Push-T & OGBench-Cube \\
\midrule
Fixed $(1,3,\text{on})$ \,(trusted hybrid)       & $99.0$ & $74.3$ & $\mathbf{92.7}$ & $\mathbf{94.7}$ \\
Fixed $(1,0.1,\text{on})$ \,(intuition-dominant)  & $\mathbf{99.2}$ & $65.5$ & $71.7$ & $91.5$ \\
Fixed $(0,1,\text{off})$ \,(forward-only)         & $87.7$ & $83.2$ & $89.8$ & $66.2$ \\
\midrule
Reliability Gate (adaptive)                       & $\mathbf{99.2}$ & $\mathbf{83.8}$ & $\mathbf{92.7}$ & $\mathbf{94.7}$ \\
\bottomrule
\end{tabular}
\end{table}

\subsection{Additional Experiments}
\label{sec:exp-ablations}

Two further checks support the design; both are detailed in the appendices.
\emph{(i)} A CEM-budget sweep ($T \in \{15,30,60\}$ at
$S{=}300$) shows IMWM's gains persist through $T{=}60$ on the Two-Room and
OGBench-Cube sentinels and do not close with more iterations
(Appendix~\ref{app:extra-results}, \S\ref{sec:exp-add-budget}). \emph{(ii)}
Substituting a heteroscedastic forward-density likelihood for the rollout-MSE
term regresses planner ranking despite better validation NLL
(Appendix~\ref{app:audit-ledger}), consistent with objective
mismatch~\citep{lambert20_objective_mismatch}. Finally, on the oracle-dynamics
diagnostic of \S\ref{sec:diagnosis-oracle}, whenever the CEM population contains
a goal-reaching candidate the terminal latent-MSE ranks one first (rank-$0$ in
$\geq 95.6\%$ of such replans on the validated tasks;
Appendix~\ref{app:extra-results}, \S\ref{sec:exp-diag-rank-success}),
corroborating that the bottleneck is candidate coverage, not ranking.

\section{Limitations and future directions}
\label{sec:limitations}

\paragraph{Limitations.}
\label{sec:limitations-current}
IMWM's reliability gate is hand-designed and frozen (two interpretable
diagnostics and fixed thresholds rather than a learned router), so it depends
on those features proxying intuition reliability and on the thresholds suiting
the evaluated distribution; a task whose reliability axis these features miss
would need recalibration. Our results are on four pixel-based goal-reaching
tasks from a single benchmark family, and we make no claim of transfer to
broader benchmarks or to real-robot control. Both the intuition scorer and the
retrieval bank are built from successful demonstrations, so IMWM cannot bootstrap
the intuition signal from interaction alone. The Push-T gain is smaller
($+2.8$~pp) because the baseline is already strong there, and Push-T is
separately excluded from the oracle anchor for a simulator-physics confound
(Appendix~\ref{app:pusht-quarantine}); the two caveats are independent. We also
leave to future work the per-task trajectory and latent-elite visualizations
(the current pipeline does not log candidate latents) and the full $(S, T)$
CEM-budget grid that would directly probe the proposal-volume axis of
\S\ref{sec:diagnosis}.

\paragraph{Future directions.}
\label{sec:limitations-future}
The most natural extension is a learnable, amortized gate that predicts the
recipe from a richer reliability feature pool, removing human-set thresholds;
the continuous relaxation of Appendix~\ref{app:conc-mode-defs} is a step toward
a smoothly interpolating deployed gate. Beyond the gate, we see value in
multi-step or multi-modal intuition models; broader benchmarks and real-robot
transfer~\citep{nagabandi20pddm,nair22r3m}; reducing the demonstration
requirement through self-supervised inverse-dynamics
objectives~\citep{pathak17icm} or self-collected banks; and joint,
planning-aware fine-tuning of the intuition score and world
model~\citep{farahmand17vaml,tdmpc}, potentially with a value-guided cost
term~\citep{eysenbach22contrastiverl} that the gate could arbitrate alongside
the existing two. Finally, our experiments cover four tasks from a single
benchmark family; a broader evaluation across the wider environment suite of
the \texttt{stable-worldmodel} platform~\citep{swm}, on which IMWM is built, is
a natural next step for testing whether the gate's reliability diagnostics
generalize.

\section{Conclusion}
\label{sec:conclusion}

Latent world-model planning with a finite CEM budget can be limited by its
\emph{search} rather than by its forward predictor: even an idealized world
model (the learned predictor replaced by literal environment rollout under
the original budget) still fails on some tasks because the planner's finite
queries never place a goal-reaching candidate in the population
(\S\ref{sec:diagnosis-oracle}). We address this not by training a better
predictor but by pairing the world model with an \emph{intuition model}, a
jointly trained inverse-side encoder and bilinear scorer. The intuition model is
consumed through three components: Retrieval Initialization centers the
CEM proposal on a retrieved demonstration chunk, a Hybrid Cost combines
the intuition score with the world-model rollout cost, and a Reliability
Gate decides per cell, before any query is spent, how much to rely on the
intuition model.
The resulting planner, \emph{IMWM}, improves over the world-model-only baseline
to $99.2\%$ on Two-Room ($+11.5$~pp, $12/0/0$), $94.7\%$ on OGBench-Cube
($+28.5$~pp, $12/0/0$), $92.7\%$ on Push-T ($+2.8$~pp, $8/1/3$), and $83.8\%$ on
Reacher (a near-tie, $+0.7$~pp, $7/1/4$) on a 48-cell fresh-seed grid. The
algorithmic family is fixed across all experiments; the gate's only per-cell
decision is which of three pre-specified recipes to use, so the gains do not
come from per-task tuning. Our main limitations
(\S\ref{sec:limitations-current}) are a hand-designed gate, a single benchmark
family, and a demonstration-data dependence; these point to learnable
arbitration, broader transfer, and joint planning-aware fine-tuning as the
immediate next steps (\S\ref{sec:limitations-future}).

\bibliography{refs}
\bibliographystyle{iclr2026_conference}

\appendix
\section{Formal limitations and proposal-side escape}
\label{app:formal}

The arguments in Section~\ref{sec:diagnosis} are formalized here. \textbf{Notation.} $c_m$ denotes the Lebesgue volume of the unit ball in $\mathbb{R}^m$. $B_\varepsilon(x) = \{a \in \mathbb{R}^m : \|a - x\|_2 \le \varepsilon\}$. $\chi^2_m$ denotes a chi-square random variable with $m$ degrees of freedom. The action space is $A = [0,1]^m \subset \mathbb{R}^m$.

\subsection{Finite-query proposal-volume bottleneck under exact world model}
\label{app:theorem-a1}

\paragraph{Setup.}
Fix $m \ge 1$ and $\varepsilon > 0$. Pick a target $a^\dagger \in A$. Define the success set $S_\varepsilon(a^\dagger) := B_\varepsilon(a^\dagger) \cap A$. The MDP $\mathcal{M}_\varepsilon$ has three states $\{s_{\mathrm{start}}, s_{\mathrm{goal}}, s_{\mathrm{fail}}\}$, action space $A$, deterministic dynamics from $s_{\mathrm{start}}$ ($\to s_{\mathrm{goal}}$ if $a \in S_\varepsilon(a^\dagger)$, else $\to s_{\mathrm{fail}}$), absorbing terminal states, and reward $R(s_{\mathrm{goal}}) = 1$, $R(s_{\mathrm{fail}}) = 0$. The optimal value $V^*(s_{\mathrm{start}}) = 1$.

\paragraph{Oracle-strength LeWM cost surface.}
We do not claim that a specific $(\phi, \hat M)$ minimizes the LeWM loss on this MDP; for a finite three-state latent, the prediction-loss/SIGReg objective need not be reached by any standard training. Instead we consider any $(\tilde\phi, \tilde{\hat M})$ that \emph{induces} the $L^2$-to-goal cost surface $J_{\mathrm{LeWM}}(a) \in \{0, c_{\mathrm{fail}}\}$ with $0$ exactly on $S_\varepsilon(a^\dagger)$ and $c_{\mathrm{fail}} > 0$ elsewhere. This is the LeWM framework's strongest possible cost surface: value-aligned at every action. The bound below depends only on the existence of this binary-oracle cost surface, not on how $(\tilde\phi, \tilde{\hat M})$ was obtained.

\begin{lemma}[Finite-query identification under black-box binary feedback]
\label{lem:identification}
Let $P$ be any randomized planner that interacts with a hidden cost function $f : A \to \{0, c_{\mathrm{fail}}\}$ \emph{only} by issuing at most $n$ adaptive queries $x_1, \ldots, x_n \in A$, each receiving the scalar feedback $f(x_i)$, and then outputting a final action $x_{\mathrm{out}} \in A$. The planner may use the entire transcript of past queries plus feedbacks and its internal randomness to choose each $x_i$ and $x_{\mathrm{out}}$, but cannot access $f$ other than through queries.

Let $a^\dagger \sim \mathrm{Uniform}(\mathrm{int}(A))$ and $f(a) = 0$ iff $a \in S_\varepsilon(a^\dagger)$, else $f(a) = c_{\mathrm{fail}} > 0$. Then for any $\varepsilon > 0$:
\begin{equation*}
\Pr_{a^\dagger, P}\!\left[ \exists\, i \in \{1, \ldots, n, \mathrm{out}\} : x_i \in S_\varepsilon(a^\dagger) \right] \le (n + 1)\, c_m\, \varepsilon^m.
\end{equation*}
\end{lemma}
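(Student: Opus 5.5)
The plan is a coupling with a fixed "null" oracle, followed by a union bound over the $n+1$ points that planner produces. Fix the planner's internal randomness $\omega$ and consider the run of $P$ against the constant oracle $f_0 \equiv c_{\mathrm{fail}}$. This run produces a deterministic sequence of points $y_1(\omega), \ldots, y_n(\omega), y_{\mathrm{out}}(\omega) \in A$. Because $f_0$ does not involve $a^\dagger$, these points are independent of $a^\dagger$. The whole argument reduces the adaptive, feedback-dependent interaction to this non-adaptive list.

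\textbf{Coupling step.} I would show that, for every $\omega$ and every $a^\dagger$, the following two events coincide:
\[
\{\exists i : x_i \in S_\varepsilon(a^\dagger)\} \;=\; \{\exists i : y_i(\omega) \in S_\varepsilon(a^\dagger)\},
\]
where the $x_i$ are generated by the same $\omega$ against the true $f$. The proof is by induction on the query index. As long as no earlier query has landed in $S_\varepsilon(a^\dagger)$, every feedback received equals $c_{\mathrm{fail}} = f_0(\cdot)$. The transcripts of the two runs therefore agree, so the next point chosen is identical: $x_{i} = y_i(\omega)$. Consequently the first index at which the true run hits $S_\varepsilon(a^\dagger)$ is also the first index at which the null run hits it, and vice versa. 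The case "out" is handled the same way, since $x_{\mathrm{out}}$ is a function of the transcript and $\omega$ alone. I expect this step to be the main point needing care. One must state precisely that $P$ is a measurable map from (transcript, $\omega$) to the next point, and that the two runs use the \emph{same} $\omega$. Once that formalization is fixed, the induction is immediate.

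\textbf{Counting step.} Condition on $\omega$. For each fixed point $y \in A$,
\[
\Pr_{a^\dagger}\bigl[y \in S_\varepsilon(a^\dagger)\bigr] \le \Pr_{a^\dagger}\bigl[\|a^\dagger - y\|_2 \le \varepsilon\bigr] = \mathrm{vol}\bigl(B_\varepsilon(y)\cap \mathrm{int}(A)\bigr) \le c_m \varepsilon^m.
\]
This uses that $a^\dagger$ is uniform on $\mathrm{int}(A)$, which has Lebesgue volume $1$, so its density is $1$.

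\textbf{Conclusion.} Apply a union bound over the $n+1$ points $y_1(\omega),\ldots,y_n(\omega),y_{\mathrm{out}}(\omega)$. This gives a conditional probability of at most $(n+1)c_m\varepsilon^m$. By the coupling step, this is also the conditional probability of the event in the lemma. Integrating over $\omega$ yields the claim.

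Two remarks on scope. If $P$ issues fewer than $n$ queries, I would pad its run with dummy repeated queries, which only weakens the event being bounded. The bound is trivially true when $(n+1)c_m\varepsilon^m \ge 1$, so no restriction on $\varepsilon$ is needed.
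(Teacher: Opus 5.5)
Your proposal is correct and follows essentially the same route as the paper: fix the internal randomness $\omega$, note that until a query lands in $S_\varepsilon(a^\dagger)$ the planner sees only the constant $c_{\mathrm{fail}}$ transcript and so produces a fixed list of $n+1$ points depending on $\omega$ alone, bound the chance that $a^\dagger$ falls in any of their $\varepsilon$-balls by $(n+1)c_m\varepsilon^m$, and integrate over $\omega$. Your explicit induction for the coupling, and your observation that $\mathrm{int}(A)$ has unit volume so $a^\dagger$ has density $1$, spell out steps the paper states in one line.
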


\begin{proof}
Fix the planner's internal randomness $\omega$. By the black-box-access assumption, until any query produces feedback $0$, the planner's feedback transcript is the constant string $(c_{\mathrm{fail}}, \ldots, c_{\mathrm{fail}})$, and the planner's query sequence on this transcript is deterministic in $\omega$ only: write it $x_1^0(\omega), \ldots, x_n^0(\omega), x_{\mathrm{out}}^0(\omega)$. A success at any query $i$ requires $a^\dagger \in B_\varepsilon(x_i^0(\omega))$. Using $\mathrm{Vol}(B_\varepsilon(x) \cap A) \le c_m \varepsilon^m$ for any $x \in A$:
\begin{equation*}
\Pr_{a^\dagger}\!\left[ \exists\, i : a^\dagger \in B_\varepsilon(x_i^0(\omega)) \mid \omega \right]
\;\le\; \mathrm{Vol}\!\left( \textstyle\bigcup_{i=0}^{n} B_\varepsilon(x_i^0(\omega)) \cap A \right)
\;\le\; (n + 1)\, c_m\, \varepsilon^m.
\end{equation*}
Integrating over $\omega$ gives the lemma. By averaging over $a^\dagger$, there exists a fixed $a^\dagger \in A$ with $\Pr_P[\mathrm{success} \mid a^\dagger] \le (n + 1)\, c_m\, \varepsilon^m$.
\end{proof}

The black-box-access assumption is essential: an optimizer with symbolic access to $f$ could read off $a^\dagger$ directly. The lemma bounds finite-query black-box planners using the world model as a cost oracle, which is precisely the LeWM+CEM operating regime.

\begin{theorem}[Fixed-budget LeWM/CEM proposal-volume bottleneck]
\label{thm:bottleneck}
Let LeWM+CEM be a randomized planner that accesses the world model only through at most $n := N T$ cost queries ($N$ samples per iteration over $T$ iterations) and one final output action. For every such budget $(N, T)$ and every $\delta \in (0, 1)$, there exist $\varepsilon > 0$ and a target $a^\dagger \in A$ such that on $\mathcal{M}_\varepsilon$ with the oracle-strength LeWM cost surface above,
\begin{equation*}
\Pr\!\left[ \mathrm{LeWM\text{+}CEM\ output} \in S_\varepsilon(a^\dagger) \right] \;\le\; \delta.
\end{equation*}
\end{theorem}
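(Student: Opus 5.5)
The plan is to reduce Theorem~\ref{thm:bottleneck} directly to Lemma~\ref{lem:identification} by showing that LeWM+CEM is a special case of the black-box planner class covered there. Then we choose $\varepsilon$ small enough that the lemma's bound drops below $\delta$.

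\textbf{Step 1: LeWM+CEM fits the lemma's planner model.} On $\mathcal{M}_\varepsilon$ with the oracle-strength cost surface, every world-model evaluation of a candidate $a$ returns $J_{\mathrm{LeWM}}(a) \in \{0, c_{\mathrm{fail}}\}$, which equals $f(a)$ as defined in the lemma. CEM samples its $N$ candidates at iteration $t$ from a Gaussian whose parameters are a deterministic function of the earlier candidates and their costs: elite selection, refitting, and $z$-scoring all act only on the transcript. The candidates therefore form adaptive queries driven by the transcript and internal randomness $\omega$. There are $n = NT$ of them, and the final elite-mean output plays the role of $x_{\mathrm{out}}$.

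One technicality needs handling here. Gaussian samples, and possibly the elite mean, can fall outside $A=[0,1]^m$. I would handle this by composing with the projection onto $A$, or by noting that a query outside $A$ can only hit $B_\varepsilon(a^\dagger)$ where it meets $A$. In either case the volume bound $\mathrm{Vol}(B_\varepsilon(x)\cap A)\le c_m\varepsilon^m$ used in the lemma's proof holds for every $x\in\mathbb{R}^m$. Ties and arbitrary tie-breaking rules are absorbed into $\omega$.

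\textbf{Step 2: Apply the lemma and pick $\varepsilon$.} Lemma~\ref{lem:identification} gives that the probability, over $a^\dagger \sim \mathrm{Uniform}(\mathrm{int}(A))$ and the planner's randomness, that any query or the output lands in $S_\varepsilon(a^\dagger)$ is at most $(NT+1)c_m\varepsilon^m$. This event contains the event that the output lies in $S_\varepsilon(a^\dagger)$. Choose
\[
\varepsilon := \bigl(\delta/((NT+1)c_m)\bigr)^{1/m},
\]
or anything smaller. The averaged success probability is then at most $\delta$.

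\textbf{Step 3: Pass from average to a fixed target.} Since the average over $a^\dagger$ of $\Pr_P[\mathrm{success}\mid a^\dagger]$ is at most $\delta$, some $a^\dagger$ in $\mathrm{int}(A)$ attains at most $\delta$, which is the lemma's closing remark. One subtlety is that the MDP $\mathcal{M}_\varepsilon$ depends on $a^\dagger$, while the planner must be fixed independently of it. This is fine because the planner is specified before $a^\dagger$ is chosen, and it sees $a^\dagger$ only through $f$.

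\textbf{Main obstacle.} Step 1 is the main obstacle: I must argue rigorously that CEM's behaviour is measurable in, and determined by, $(\omega, \text{transcript})$. In particular, the $z$-scoring in the cost and the elite refits must never access $f$ except through queried values. I would handle this by coupling the randomness, fixing the standard-normal draws in advance as $\omega$, and then writing each candidate as a deterministic map of $\omega$ and previous feedback. This is exactly the form the lemma's proof uses, since it freezes the transcript at all-$c_{\mathrm{fail}}$.
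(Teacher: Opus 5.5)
Your proposal is correct and follows the paper's proof essentially step for step. Like the paper, you cast LeWM+CEM as a finite-query black-box planner under the binary cost surface, invoke Lemma~\ref{lem:identification} with $n = NT$, take $\varepsilon$ at or below $(\delta/((NT+1)c_m))^{1/m}$, and average over $a^\dagger$ to fix a target; your extra care about out-of-box Gaussian samples, tie-breaking, and $z$-scoring depending only on the transcript fills in details the paper leaves implicit without changing the argument.
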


\begin{proof}
The cost surface $J_{\mathrm{LeWM}}(a) \in \{0, c_{\mathrm{fail}}\}$ satisfies the binary-feedback hypothesis of Lemma~\ref{lem:identification}. CEM is a randomized finite-query black-box planner: it issues up to $N T$ queries and outputs the elite mean (or lowest-cost candidate). Applying Lemma~\ref{lem:identification} with $n = N T$ and choosing $\varepsilon < (\delta / ((NT + 1) c_m))^{1/m}$:
\begin{equation*}
\Pr_{a^\dagger, \mathrm{CEM}}[\mathrm{success} \mid a^\dagger \sim \mathrm{Uniform}] \;\le\; (NT + 1)\, c_m\, \varepsilon^m \;\le\; \delta.
\end{equation*}
By averaging there exists a fixed $a^\dagger$ achieving the bound.
\end{proof}

The bound $(NT + 1)\, c_m\, \varepsilon^m$ is independent of $(\phi, \hat M)$ quality; improving the world model under the constructed oracle-cost regime does \emph{not} shrink this bound. The construction makes the oracle cost surface maximally informative under the binary-feedback assumption, so the bottleneck lives at the planner's finite-query budget and the proposal's inability to inject information about $a^\dagger$ into the query distribution. The bound is existential / worst-case: for any fixed budget, there exist hard manifolds on which the bound is tight.

\begin{figure}[t]
\centering
\includegraphics[width=0.95\textwidth]{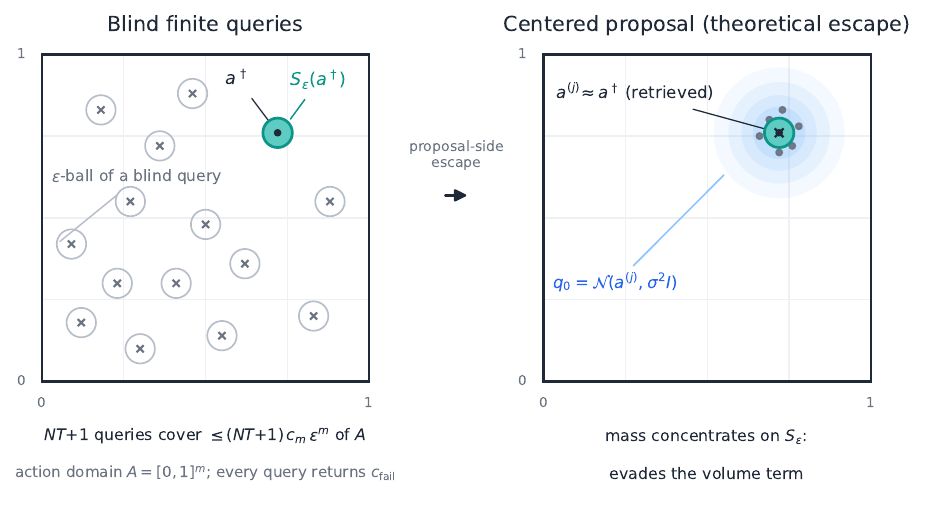}
\caption{\textbf{The search bottleneck of Theorem~\ref{thm:bottleneck}.}
\emph{Left:} under a perfect, value-aligned cost, a finite planner's $NT{+}1$
black-box queries are blind (each returns the constant $c_{\mathrm{fail}}$ until
one lands in the tiny success ball $S_\varepsilon(a^\dagger)$); the union of
their $\varepsilon$-balls covers at most $(NT{+}1)\,c_m\,\varepsilon^m$ of the
action domain, independent of predictor quality. \emph{Right:} a proposal
centered near $a^\dagger$ concentrates mass on $S_\varepsilon$ and evades the
volume term. This is the \emph{theoretical} proposal-side escape
(Appendix~\ref{app:theorem-a4}, Theorem~\ref{thm:proposal-escape}; one-step /
best-so-far CEM under retrieval-coverage assumptions) that motivates the
Retrieval Initialization component of \S\ref{sec:method}; we do not claim
retrieval initialization is independently responsible for the empirical gains
(Appendix~\ref{app:extra-results}).}
\label{fig:bottleneck}
\end{figure}

\subsection{Proposal-side escape via retrieval-initialized CEM}
\label{app:theorem-a4}

\paragraph{Setup.}
IMWM adds a fixed demonstration bank $R = \{(k_0^{(i)}, k_g^{(i)}, a^{(i)})\}_{i=1}^{|R|}$ whose entries store cosine-normalized start/goal latents $(k_0^{(i)}, k_g^{(i)})$ produced by the frozen intuition encoder $\phi^I$ together with the demonstration action chunks $a^{(i)}$. The deployed IMWM retrieval criterion is cosine nearest-neighbor lookup in the intuition encoder's latent space: given an eval query $(o_0, o_g)$ with $(\zinv{0}, \zinv{g}) = (\phi^I(o_0), \phi^I(o_g))$, the planner selects $a^{(j)} = a^{(i^*)}$ where $i^* = \arg\max_i \mathrm{cos}([\zinv{0}; \zinv{g}], [k_0^{(i)}; k_g^{(i)}])$.

\paragraph{Conditions.}
\begin{itemize}
\item \textbf{(C1) Bank coverage}: the bank $R$ contains at least one entry whose action chunk lies within $\varepsilon/2$ of a successful action $a^\dagger$ for the query.
\item \textbf{(C2) Encoder-geometry calibration}: the cosine nearest-neighbor lookup in $z^I$-space selects such a covered entry; equivalently, $\|a^{(j)} - a^\dagger\|_2 \le \varepsilon/2$.
\item \textbf{(C3) Proposal gate engages}: the retrieval-initialized sampling mode is active for this cell. The Section~\ref{sec:method-gate} cost-side weight $\alphainv$ is empirically correlated with the proposal-gate state but is \emph{not} the mathematical source of the volume break; the volume reduction comes from changing the proposal distribution.
\end{itemize}

Conditions (C1) and (C2) separate the bank-coverage property (a property of $R$) from the encoder-geometry property (a property of $\phi^I$): coverage is about whether the bank contains a useful action chunk; calibration is about whether the cosine lookup in the intuition encoder's latent space selects it.

The IMWM family contains three planner variants admitting closed-form lower bounds.

\begin{theorem}[Variant 1: retrieval-only]
\label{thm:retrieval-only}
Under (C1), (C2), (C3), if the planner directly executes the retrieved action $a^{(j)}$, then the executed action lies in $B_\varepsilon(a^\dagger)$ with probability $1$.
\end{theorem}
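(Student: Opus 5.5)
The argument is a direct unwinding of the conditions; no probabilistic estimate is needed because the executed action is a deterministic function of the query once the bank and encoder are fixed. First I will make the planner explicit. Under (C3) the retrieval mode is active, and the retrieval-only variant executes $a^{(j)} = a^{(i^*)}$, with $i^*$ the cosine arg-max in $z^I$-space from \eqref{eq:retrieval-init}. Given the query $(o_0,o_g)$, the frozen encoder $\phi^I$ and the fixed bank $R$, this action is fully determined; I will break arg-max ties by a fixed rule. So the event ``executed action $\in B_\varepsilon(a^\dagger)$'' is either sure or impossible. It therefore suffices to show it holds deterministically.

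Second, I will invoke (C2). It states $\|a^{(j)} - a^\dagger\|_2 \le \varepsilon/2$ for the entry the lookup actually selects, and (C1) guarantees such an entry exists. Since $\varepsilon/2 \le \varepsilon$, the definition $B_\varepsilon(x)=\{a:\|a-x\|_2\le\varepsilon\}$ gives $a^{(j)} \in B_\varepsilon(a^\dagger)$. Hence the success probability is $1$, taken over any remaining randomness: solver seed, or the randomness in $a^\dagger$ if it is drawn as in Lemma~\ref{lem:identification}, provided (C1)--(C2) hold almost surely.

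The one point needing care, and the only real obstacle, is membership in the success set $S_\varepsilon(a^\dagger)=B_\varepsilon(a^\dagger)\cap A$ rather than merely in the ball, since the MDP $\mathcal{M}_\varepsilon$ rewards only actions in $A$. The statement as written asks only for $B_\varepsilon(a^\dagger)$, so the proof is complete without this. As a remark I would add that demonstration chunks are valid actions, $a^{(j)}\in A=[0,1]^m$. Then the executed action lies in $S_\varepsilon(a^\dagger)$ and reaches $s_{\mathrm{goal}}$, which contrasts with the $(NT+1)c_m\varepsilon^m$ bound of Theorem~\ref{thm:bottleneck}: here zero cost queries are spent. The slack between $\varepsilon/2$ and $\varepsilon$ is not needed for this variant; it is reserved for the later sampled variants, where Gaussian noise around $a^{(j)}$ must stay within the remaining $\varepsilon/2$.
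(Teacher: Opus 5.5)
Your proof is correct and matches the paper's, which likewise uses (C2) to get $a^{(j)} \in B_{\varepsilon/2}(a^\dagger) \subset B_\varepsilon(a^\dagger)$ and notes that the planner outputs $a^{(j)}$ directly. Your remarks on determinism of the lookup and on membership in $A$ go beyond what the statement needs, but they do no harm.
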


\begin{proof}
By (C2), $a^{(j)} \in B_{\varepsilon/2}(a^\dagger) \subset B_\varepsilon(a^\dagger)$. The planner outputs $a^{(j)}$ directly.
\end{proof}

\begin{theorem}[Variant 2: best-observed-candidate]
\label{thm:best-observed}
Under (C1), (C2), (C3), if the planner force-inserts $a^{(j)}$ into its candidate set \emph{and} returns $\arg\min_i J_{\mathrm{LeWM}}(x_i)$, then the returned action lies in $B_\varepsilon(a^\dagger)$ with probability $1$.
\end{theorem}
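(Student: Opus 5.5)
The argument is a direct reduction to Theorem~\ref{thm:retrieval-only}, using the binary structure of the oracle-strength cost surface $J_{\mathrm{LeWM}}$. First, by (C1) and (C2) the retrieved chunk satisfies $\|a^{(j)} - a^\dagger\|_2 \le \varepsilon/2$, so $a^{(j)} \in B_{\varepsilon/2}(a^\dagger) \subset B_\varepsilon(a^\dagger)$. To conclude $a^{(j)} \in S_\varepsilon(a^\dagger) = B_\varepsilon(a^\dagger)\cap A$, I also need $a^{(j)} \in A$. I would take this from the fact that bank chunks are valid demonstration actions in $A = [0,1]^m$; if needed, I would state it explicitly as part of (C1). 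Since $J_{\mathrm{LeWM}}$ equals $0$ exactly on $S_\varepsilon(a^\dagger)$, this gives $J_{\mathrm{LeWM}}(a^{(j)}) = 0$.

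Second, because $a^{(j)}$ is force-inserted into the candidate set $\{x_i\}$ and $J_{\mathrm{LeWM}}$ takes only the values $0$ and $c_{\mathrm{fail}}$, the minimum of $J_{\mathrm{LeWM}}$ over the candidates is $0$. Whatever the CEM sampling randomness and whatever tie-breaking rule the $\arg\min$ uses, the returned candidate $x_{i^\star}$ therefore satisfies $J_{\mathrm{LeWM}}(x_{i^\star}) = 0$. Because $c_{\mathrm{fail}}>0$ and the zero set of $J_{\mathrm{LeWM}}$ is exactly $S_\varepsilon(a^\dagger)$, it follows that $x_{i^\star} \in S_\varepsilon(a^\dagger) \subset B_\varepsilon(a^\dagger)$. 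The other candidates may be arbitrary samples, which is why the conclusion holds with probability $1$ and does not depend on $N$ or $T$.

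The main subtlety is that the returned action need not be $a^{(j)}$ itself. It may be any zero-cost candidate, so the argument must rest on the exact characterization of the zero set of $J_{\mathrm{LeWM}}$, not on the location of $a^{(j)}$. This is also where the contrast with Theorem~\ref{thm:bottleneck} lies. The best-observed rule is a genuine $\arg\min$ over evaluated candidates, not an elite mean, so it cannot average a success point with failures and leave $S_\varepsilon$. I would remark that this guarantee fails for the elite-mean output and that Variant 2 sidesteps the issue.
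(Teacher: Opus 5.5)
Your proof is correct and follows essentially the paper's argument: (C2) gives $J_{\mathrm{LeWM}}(a^{(j)})=0$, every candidate outside $B_\varepsilon(a^\dagger)$ costs $c_{\mathrm{fail}}>0$, so the $\arg\min$ must land in the zero set $S_\varepsilon(a^\dagger)$; your explicit requirement $a^{(j)}\in A$ is a legitimate tightening that the paper leaves implicit. One correction to your closing remark: Theorem~\ref{thm:bottleneck} already covers lowest-cost-candidate outputs, so the escape here comes from the force-inserted query $a^{(j)}$ carrying information about $a^\dagger$ (via (C1)--(C2)), not from using $\arg\min$ instead of an elite mean.
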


\begin{proof}
$J_{\mathrm{LeWM}}(a^{(j)}) = 0$ (by (C2) and the Appendix~\ref{app:theorem-a1} cost surface). Any candidate $x \notin B_\varepsilon(a^\dagger)$ has cost $c_{\mathrm{fail}} > 0$. Thus $a^{(j)}$ strictly dominates any failure-region candidate, and the best-observed-candidate output lies in $B_\varepsilon(a^\dagger)$.
\end{proof}

\begin{theorem}[Variant 3: one-step elite-mean CEM proposal-side escape]
\label{thm:proposal-escape}
Assume the planner performs \emph{one iteration} of elite-mean CEM with retrieval-initialized proposal: it draws $N$ candidates with a forced-mean candidate at $a^{(j)}$ and $N - 1$ IID samples from $q_0 = \mathcal{N}(a^{(j)}, \sigma^2 I)$ truncated to $A$. Assume the away-from-boundary condition $B_{\varepsilon/2}(a^{(j)}) \subset A$. The planner selects the top-$K$ elites by cost and returns the elite mean $\mu_{\mathrm{out}}$. Then under (C1), (C2), (C3),
\begin{equation*}
\Pr\!\left[\mu_{\mathrm{out}} \in B_\varepsilon(a^\dagger)\right] \;\ge\; \Pr\!\left[1 + \mathrm{Binomial}(N - 1, p) \ge K\right] \quad\text{where}\quad p := \Pr\!\left[\chi^2_m \le (\varepsilon/(2\sigma))^2\right].
\end{equation*}
\end{theorem}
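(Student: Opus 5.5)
The plan is to exhibit a ``good'' event whose probability is exactly the binomial tail on the right-hand side, and to show that this event forces $\mu_{\mathrm{out}} \in B_\varepsilon(a^\dagger)$ deterministically. Call a candidate $x$ \emph{good} if $\|x - a^{(j)}\|_2 \le \varepsilon/2$. By (C2) and the triangle inequality, every good candidate satisfies $\|x - a^\dagger\|_2 \le \varepsilon$. Since candidates lie in $A$, a good candidate lies in $S_\varepsilon(a^\dagger)$. On the oracle-strength cost surface of Appendix~\ref{app:theorem-a1} it therefore has $J_{\mathrm{LeWM}}(x) = 0$. The forced-mean candidate $a^{(j)}$ is good trivially. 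Let $G$ denote the number of good candidates. Then $G = 1 + \sum_{i=1}^{N-1} \mathbf{1}[x_i \text{ good}]$, where the summands are IID.

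Next I will lower-bound the per-sample probability $p'$ that an IID sample is good. For the untruncated Gaussian $\mathcal{N}(a^{(j)}, \sigma^2 I)$, the quantity $\|x - a^{(j)}\|_2^2/\sigma^2$ is $\chi^2_m$, so the probability of the good event is exactly $p = \Pr[\chi^2_m \le (\varepsilon/(2\sigma))^2]$. Truncating to $A$ means conditioning on $A$. Because the away-from-boundary assumption gives $B_{\varepsilon/2}(a^{(j)}) \subset A$, the truncated probability is $p/\Pr_{\mathcal{N}}[A] \ge p$. So $p' \ge p$. By monotone coupling of Bernoulli variables, $G$ stochastically dominates $1 + \mathrm{Binomial}(N-1, p)$, which gives
\[
\Pr[G \ge K] \ge \Pr[1 + \mathrm{Binomial}(N-1,p) \ge K].
\]

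It remains to show that $\{G \ge K\}$ implies success. If at least $K$ candidates have cost $0$, then the top-$K$ elites by cost all have cost $0$, whatever the tie-breaking rule, since costs take only the values $0$ and $c_{\mathrm{fail}} > 0$. Every cost-$0$ candidate lies in $S_\varepsilon(a^\dagger) \subset B_\varepsilon(a^\dagger)$. Some elites may be cost-$0$ candidates that are not good; that is harmless, because only membership in $B_\varepsilon(a^\dagger)$ matters. Since $B_\varepsilon(a^\dagger)$ is convex, the elite mean $\mu_{\mathrm{out}}$ lies in it, which completes the argument.

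The main obstacle, I expect, is the bookkeeping in the middle step rather than any hard estimate. One issue is justifying that truncation only inflates the mass of the good ball; this is exactly where the boundary assumption is used. The other is making sure the elite-selection step is robust to ties and to non-good zero-cost elites. I would handle both as above: by the conditioning identity, and by arguing through the cost level set rather than through the good set.
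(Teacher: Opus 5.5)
Your proposal is correct and follows essentially the same route as the paper's proof. The forced-mean candidate plus the IID samples landing in $B_{\varepsilon/2}(a^{(j)})$ give a count that stochastically dominates $1+\mathrm{Binomial}(N-1,p)$, with the away-from-boundary condition ensuring truncation only inflates that mass; on the event of at least $K$ zero-cost candidates, every top-$K$ elite lies in $B_\varepsilon(a^\dagger)$, and convexity places $\mu_{\mathrm{out}}$ there. Your explicit conditioning identity for the truncation and your remarks on tie-breaking and on non-good zero-cost elites just spell out steps the paper states tersely.
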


\begin{proof}
By (C2), $a^{(j)} \in B_{\varepsilon/2}(a^\dagger) \subset B_\varepsilon(a^\dagger)$, so the forced-mean candidate has cost $0$ deterministically. For each of the $N - 1$ IID samples $a$ from $q_0$ truncated to $A$, the away-from-boundary condition gives $B_{\varepsilon/2}(a^{(j)}) \subset A$, so
\begin{equation*}
\Pr_{a \sim q_0}\!\left[a \in B_{\varepsilon/2}(a^{(j)})\right] \;\ge\; \Pr_{a \sim \mathcal{N}(a^{(j)}, \sigma^2 I)}\!\left[a \in B_{\varepsilon/2}(a^{(j)})\right] \;=\; \Pr\!\left[\chi^2_m \le (\varepsilon/(2\sigma))^2\right] \;=\; p.
\end{equation*}
By the triangle inequality, any sample in $B_{\varepsilon/2}(a^{(j)})$ lies in $B_\varepsilon(a^\dagger)$ and has cost $0$. Let $S$ be the count of IID candidates in $B_{\varepsilon/2}(a^{(j)})$; then $S$ stochastically dominates $\mathrm{Binomial}(N - 1, p)$. Conditional on $\{1 + S \ge K\}$, at least $K$ candidates have cost $0$, strictly dominating any failure-region candidate; every top-$K$ elite is from the zero-cost set, which is contained in $B_\varepsilon(a^\dagger)$. By convexity of $B_\varepsilon(a^\dagger)$, the elite mean $\mu_{\mathrm{out}}$ lies in $B_\varepsilon(a^\dagger)$.
\end{proof}

The bound $\Pr[1 + \mathrm{Binomial}(N - 1, p) \ge K]$ tends to $1$ as $\sigma/\varepsilon \to 0$. For $K = 1$ it equals $1$ deterministically. For $K > 1$, a union bound gives $\Pr[\mathrm{failure}] \le 2^{N-1} (1 - p)^{N - K + 1}$. If the away-from-boundary condition does not hold, the bound goes through with $p$ redefined as the actual mass of the truncated proposal on $B_{\varepsilon/2}(a^{(j)})$; the structural argument ($\ge K$ zero-cost candidates $\Rightarrow$ elite mean in $B_\varepsilon(a^\dagger)$ by convexity) is unchanged.

\paragraph{One-step scope.}
The deployed multi-step elite-mean solver updates the proposal across iterations and returns the \emph{final} elite mean. A success event at iteration 1 ($\mu_1 \in B_\varepsilon(a^\dagger)$) does not directly imply success at iteration $T$ without best-so-far retention. Theorem~\ref{thm:proposal-escape} is stated for one-step elite-mean CEM; the same bound holds for multi-step CEM with best-so-far retention. The deployed multi-step run is empirically consistent with the one-step mechanism on the OGBench-Cube and Two-Room cells in Section~\ref{sec:experiments}, but we do not have a formal multi-step lower bound.

\begin{theorem}[Scope: IMWM reduces to LeWM under retrieval-informativeness failure with (C4) full fallback calibration]
\label{thm:fallback-reduce}
Add an additional condition:
\begin{itemize}
\item \textbf{(C4) Full fallback calibration} (\emph{calibration assumption, not a consequence of (C1) or (C2) failure}): when the bank-coverage condition (C1) fails or the scorer-selection condition (C2) fails (i.e., $\|a^{(j)} - a^\dagger\| > \varepsilon/2$), the evaluated cell-level gate of Section~\ref{sec:method-gate} satisfies $\rinv(c) \le \Tinv$, so $\mathrm{recipe}(c) = (\alphainv{=}0, \beff{=}1, \text{retrieval off})$. Thus the proposal-gate state disables retrieval-init and the cost-side intuition contribution is disabled; equivalently, the planner's black-box feedback is the LeWM forward cost surface up to a monotone transform.
\end{itemize}
Under (C1) or (C2) failure together with (C4), Theorem~\ref{thm:bottleneck} applies unchanged: the planner's success probability is bounded above by $(NT + 1)\, c_m\, \varepsilon^m$.
\end{theorem}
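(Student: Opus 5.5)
The plan is to reduce the statement to Lemma~\ref{lem:identification} by showing that, under (C4), the IMWM planner running with the fallback recipe is a randomized finite-query black-box planner that sees the hidden binary cost only through at most $NT$ queries. The argument is a reduction rather than new analysis. The only thing to check is that nothing in the fallback pipeline leaks information about $a^\dagger$ beyond the query feedback.

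\textbf{Step 1: the recipe is the fallback.} Invoke (C4). Whenever (C1) or (C2) fails, $\rinv(c) \le \Tinv$, so the gate of Section~\ref{sec:method-gate} selects $(\alphainv, \beff, \text{retrieval}) = (0, 1, \text{off})$.

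\textbf{Step 2: the proposal carries no information about the target.} With retrieval off, the iter-$0$ proposal is centered on the zero chunk, so it does not depend on $a^{(j)}$ or on the bank. I would also observe that the gate decision itself is made before any query and is a function of $(o_0,o_g)$, $\phi^I$ and the bank. In the $\mathcal{M}_\varepsilon$ construction these are fixed independently of the hidden $a^\dagger$. The selected recipe can therefore be absorbed into the planner's internal randomness $\omega$.

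\textbf{Step 3: the cost is a monotone transform of the oracle cost.} With $\alphainv = 0$ the hybrid cost~\eqref{eq:hybrid-cost} becomes $J = \zS(J_{\mathrm{LeWM}})$. The $z$-scoring is a candidate-set-dependent affine map, increasing whenever the batch variance is nonzero and constant otherwise. Either way it is a deterministic function of the batch of feedbacks $\{f(x_i)\}$ with $f \in \{0, c_{\mathrm{fail}}\}$. A planner that first observes the raw feedbacks and then computes $\zS$ internally simulates IMWM exactly. So IMWM-in-fallback is an instance of the planner class in Lemma~\ref{lem:identification}, issuing at most $n = NT$ queries plus one output (the elite mean, or the lowest-cost candidate).

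\textbf{Step 4: conclude.} Apply Lemma~\ref{lem:identification} with $a^\dagger \sim \mathrm{Uniform}(\mathrm{int}(A))$ to get success probability at most $(NT+1)c_m\varepsilon^m$. Then repeat the averaging step and the choice of $\varepsilon$ from the proof of Theorem~\ref{thm:bottleneck}, yielding a fixed $a^\dagger$ and the $\delta$ form of the bound.

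The main obstacle is Step 3 combined with the conditioning subtlety in Step 2. Batchwise $z$-scoring is not a pointwise monotone transform of $f$, so I must argue at the level of transcripts: the planner's state after each CEM iteration is a deterministic function of $\omega$ and the raw feedback string. Along the all-$c_{\mathrm{fail}}$ transcript, $\zS$ is degenerate, and the proof of Lemma~\ref{lem:identification} needs only that this transcript fixes the query sequence $x^0_i(\omega)$.

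I also need (C4)'s triggering event, the failure of (C1) or (C2), not to correlate the query sequence with $a^\dagger$. I would handle this by stating the bound conditionally on the recipe being the fallback, which holds by (C4). Given that recipe, the queries depend only on $\omega$, so the uniform prior over $a^\dagger$ used in the lemma is unaffected. If that independence cannot be guaranteed, the fallback is to read the theorem as a worst-case statement over $a^\dagger$, exactly as in Theorem~\ref{thm:bottleneck}.
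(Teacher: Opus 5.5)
Your proposal is correct and takes the same route as the paper. The paper gives no separate proof block: it lets (C4) force the fallback recipe $(0,1,\text{off})$, so the planner becomes plain LeWM+CEM whose feedback is the LeWM cost surface up to a monotone transform, and then invokes Theorem~\ref{thm:bottleneck} (via Lemma~\ref{lem:identification}). Your transcript-level handling of the batchwise $z$-scoring (degenerate on the all-$c_{\mathrm{fail}}$ transcript) and your observation that the recipe is fixed before any query, so it can be folded into $\omega$, spell out what the paper only asserts through the phrase ``up to a monotone transform.''
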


\paragraph{Note on (C4).}
(C4) is a \emph{calibration assumption} on the trained $\Dinv$ together with the Section~\ref{sec:method-gate} thresholds $(\Tinv, \Tlag)$, not a mathematical consequence of (C1) or (C2) failing. The implication ``returned $a^{(j)}$ fails (C2) $\Rightarrow \rinv(c) \le \Tinv$'' is an empirical calibration property of the evaluated hard gate. A miscalibrated $\Dinv$ could engage retrieval despite uninformative retrieval (violating the routing step of (C4)) or vary $\Dinv$ across failure-region actions (injecting side information that breaks Lemma~\ref{lem:identification}'s binary-feedback hypothesis). Without (C4), the volume-bound argument may fail and IMWM could underperform LeWM by injecting a misleading proposal. The analogous continuous-relaxation condition would be $\concc < \Ttrust$, but that is the Appendix~\ref{app:conc-mode-defs} parameterization analysis rather than the evaluated gate. The Reacher near-tie in Section~\ref{sec:experiments} (paired 7/1/4, mean $\Delta = +0.7$~pp) is consistent with this calibration property; we do not claim it holds universally.

\subsection{Conditional representation-level limitation}
\label{app:r2a}

A separate, conditional limitation exists at the \emph{representation} layer rather than the planner layer. We record it as a remark, not a theorem, because it does not follow from the LeWM training objective alone.

\begin{remark}[Conditional representation-level limitation]
\label{rem:r2a}
Under the additional assumption that the encoder $\phi_\theta$ is constrained to be \emph{minimally sufficient} for next-state prediction, for example via an explicit information-bottleneck penalty, a hard dimensional bottleneck $d_z \ll d_o$, or a minimum-norm selection rule among prediction-optimal encoders, the encoder can discard observation features that are reward-relevant but transition-irrelevant. On goal-reaching MDPs where the goal-distinguishing feature is transition-irrelevant for next-state prediction, the $L^2$-to-goal cost cannot separate $\phi(s_{\mathrm{goal}})$ from $\phi(s_{\mathrm{decoy}})$, and planning fails on this distinction. This is the value-aware-model-learning limitation in the lineage of \citet{farahmand17vaml}.
\end{remark}

\paragraph{Conditionality.}
The LeWM training objective ($L^2$ next-embedding prediction $+$ SIGReg) does not, by itself, force minimal sufficiency. The prediction loss requires the latent to retain information useful for predicting the next embedding under each action, but it does not penalize redundant observation information that is harmless for prediction. SIGReg constrains the \emph{shape} of the marginal distribution of $\phi_\theta(o)$ via Epps--Pulley statistics on random unit-vector projections, not its \emph{semantic content}. In particular, a nuisance feature can be retained if it does not hurt next-embedding prediction and can be encoded within a SIGReg-compatible marginal. The representation-level limitation is therefore a real concern in the minimal-sufficiency regime but cannot be claimed as an unconditional consequence of the LeWM loss alone.

\paragraph{Orthogonality to the proposal-volume bottleneck.}
Remark~\ref{rem:r2a} addresses a \emph{representation} failure: even if the planner is exact, the cost surface in latent does not encode all task-relevant distinctions. Theorem~\ref{thm:bottleneck} (Appendix~\ref{app:theorem-a1}) addresses a \emph{search} failure: even if the cost surface is perfectly value-aligned, the planner cannot find the success manifold by volume search. These are orthogonal limitations. IMWM's proposal-side escape on the search-failure axis does \emph{not} address the representation axis: $\Dinv$ is itself conditioned on $(\phi(o_0), \phi(o_g))$, so if the encoder has discarded a goal-distinguishing feature, neither LeWM nor IMWM can recover the distinction. Closing the representation-level gap requires a different axis of improvement, such as reward-aware encoder training or a planning cost that is not $L^2$-to-goal in the prediction-trained latent.

\section{Energy-based view and demo-rank saturation audit}
\label{app:energy-audit}

This appendix complements the diagnosis of Section~\ref{sec:diagnosis} and the formal analysis of Appendix~\ref{app:formal} by recording the empirical and mechanistic findings that motivate preserving IMWM's hybrid cost. None of the material here introduces new methodology; it consolidates a multi-step diagnostic and ablation campaign that ran after the Section~\ref{sec:experiments} grand-total table was frozen.

\subsection{Finite-candidate Gibbs interpretation}
\label{app:gibbs}

For any iteration of the deployed CEM solver, which samples $S = 300$ candidates $\{a_1, \ldots, a_S\}$ at each iteration (\S\ref{sec:method-pseudocode}, Appendix~\ref{app:cem-hyperparams}), the per-anchor $z$-scored composite cost $J(a_k; e, t)$ induces a softmax/Gibbs distribution over the finite candidate set:
\begin{equation*}
p_S(k \mid e, t) = \frac{\exp\!\big(-J(a_k; e, t) / T\big)}{\sum_j \exp\!\big(-J(a_j; e, t) / T\big)}, \qquad T > 0 \text{ interpretive.}
\end{equation*}
This is well-defined as a categorical distribution over $S$ candidates for any finite $J$. A continuous Boltzmann interpretation over the solver's continuous action space is heuristic only: the deployed solver does not globally enforce a hard clip on its isotropic-Gaussian candidate samples, so we do not claim continuous normalizability. The temperature $T$ is interpretive (not tuned); the top-$K = 30$ elite fraction (a 10\% elite fraction at $S = 300$) corresponds heuristically to a low-temperature target but is not an explicit temperature.

\subsection{CEM as adaptive cross-entropy / maximum-likelihood projection}
\label{app:cem-ce}

CEM is not literally an EM algorithm. The deployed CEM loop fits a parametric Gaussian proposal $q_\psi$ to a low-energy elite set by iteratively
\begin{enumerate}
\item sampling $S = 300$ candidates from $q_\psi^{(k-1)}$;
\item defining the empirical elite/level-set target distribution over the top-$K = 30$ candidates by $J$;
\item refitting the parametric proposal $q_\psi^{(k)}$ by maximum likelihood (equivalently, $q_\psi^{(k)} = \arg\min \mathrm{KL}(\delta_{\text{elites}} \| q_\psi)$) onto the elite candidates.
\end{enumerate}
The optimizer returns the proposal mean $\mu^{(K)}$ as its action estimate, which need not equal the global MAP of $p_S(\cdot \mid e, t)$. The planner is thus a retrieval-initialized \emph{approximate} MAP search: the retrieval-top-1 mean $\abase$ and the per-cell weights $(\alphainv, \beff)$ are amortized from demonstration/retrieval statistics, while CEM performs per-query iterative search over the energy. This complements Appendix~\ref{app:formal}: it analyzes a worst-case finite-query bottleneck under an idealized sharp success manifold, while this appendix describes the deployed CEM loop as an adaptive cross-entropy optimizer over the empirical energy. The two perspectives are complementary rather than competing claims.

\subsection{Zero-training demo-rank audit}
\label{app:demo-rank}

We measure the rank of the demonstration action under $J$ on held-out training-distribution anchors. The audit shares IMWM's bank (cosine top-1 retrieval in the intuition encoder's latent space) and differs from the deployed IMWM only in the candidate-noise distribution: the audit uses 256 AR(1) ($\rho{=}0.9$, $\sigma{=}1.0$) chunks around the cosine top-1 retrieved mean, while IMWM uses isotropic Gaussian ($\sigma{=}1.0$) chunks. We did not re-run this offline audit under the isotropic-Gaussian candidate geometry; the saturation finding below is interpretive evidence about the hybrid cost $J$ as a demonstration-vs-noise discriminator. For each of $n_{\text{held-out}} = 100$ anchors per task (episode-level validation split disjoint from the candidate-bank training episodes), the audit candidate set is 257 chunks: 1 demonstration action chunk (from the source episode) at slot 0, plus 256 AR(1) ($\rho{=}0.9$, $\sigma{=}1.0$) candidates around the cosine top-1 retrieved mean. The 256 here is the audit choice, distinct from the deployed CEM solver's $S = 300$. We score each candidate under $J$ with fixed task-regime weights $(\alphainv, \beff) \in \{(1, 3), (1, 3), (1, 0.1)\}$ for OGBench-Cube / Push-T / Two-Room, matching the trusted recipes for these tasks (and equal to the saturated continuous-relaxation regimes in Appendix~\ref{app:conc-mode-defs}). The rank of the demonstration action under $J$ appears in Table~\ref{tab:demo-rank}.

\begin{table}[t]
\centering
\small
\caption{Demonstration-action rank distribution under the hybrid cost $J$ on held-out training-distribution anchors. The audit uses IMWM's cosine top-1 retrieval but the candidate noise around the retrieved mean is AR(1) ($\rho{=}0.9$, $\sigma{=}1.0$) rather than IMWM's deployed isotropic Gaussian. Each row reports the cumulative distribution of demo rank within the 257-chunk audit set (1 demo + 256 noise chunks). $N = 100$ anchors per task.}
\label{tab:demo-rank}
\begin{tabular}{lrrrrr}
\toprule
\textbf{Task}    & rank $= 1$ & rank $\le 2$ & rank $\le 5$ & rank $\le 20$ & rank $> 20$ \\
\midrule
OGBench-Cube             & 95.0\% & 97.0\% & 98.0\% & 100.0\% & 0.0\% \\
Push-T           & 84.0\% & 92.0\% & 97.0\% & 100.0\% & 0.0\% \\
Two-Room         & 94.0\% & 96.0\% & 100.0\% & 100.0\% & 0.0\% \\
\bottomrule
\end{tabular}
\end{table}

No anchors on any task have demonstration rank greater than 20 under $J$ on this substrate. The hybrid cost is near-saturated as a demonstration-vs-noise discriminator under the cosine-retrieval-centered AR(1) candidate geometry of this audit (the isotropic-deployment caveat stated above applies).

\subsection{Push-T oracle-diagnostic quarantine (non-load-bearing)}
\label{app:pusht-quarantine}

The two on-disk Push-T oracle diagnostic JSONLs are excluded from
Figure~\ref{fig:diag1-rank-histogram} and the main-text rank-vs-success
claim in \S\ref{sec:exp-diag-rank-success}. We summarize them here as
side-result only.

\textbf{ds3\_ss42 (invalid adapter).} This file was generated under the
pre-arbiter-fix Push-T adapter (Codex 778 audit). The pymunk
arbiter-cache leak invalidates the env-state restore, so the recorded
ranks are not interpretable as native Push-T physics. The cell is
labeled \texttt{Invalid PushT adapter} in
\texttt{oracle\_lewm\_report.md}~\S2.3 and is excluded from the active
result tree.

\textbf{ds3\_ss1 (modified-physics partial cell, killed mid-run).} This
file was generated after the arbiter fix but under
\texttt{collision\_persistence=0} (the contact-stability workaround).
Under this physics modification, the native c1 baseline itself drops
from $94\%$ to $30\%$ (\texttt{oracle\_lewm\_report.md}~\S6.1), so no
comparable native-Push-T baseline exists for the cell. The run was
also terminated at 8h47m after pymunk \texttt{\_post\_solve} assertion
warnings accumulated under sustained-contact CEM rollouts; only $103$
diagnostic records were written to disk and no \texttt{success\_rate}
metric was finalized.

Combined, the two files yield $203$ records with $15.3\%$ has-success
and $71\%$ rank-$0$. Because neither file is a valid native-Push-T
oracle measurement, we do not use these numbers to support the
proposal-coverage diagnosis on Push-T.

\subsection{Search-space audit: six intervention layers, all closed on this substrate}
\label{app:audit-closure}

Following the saturation finding in Appendix~\ref{app:demo-rank}, we tested \textbf{eight interventions across six conceptually distinct layers} above the frozen LeWM encoder $+\, \Dinv$. Each intervention is a pre-specified method with explicit closure criteria (offline gates and / or online smoke). Table~\ref{tab:audit-ledger} summarizes outcomes; no tested intervention improved the retained IMWM recipe on OGBench-Cube / Push-T / Two-Room.

\begin{table}[t]
\centering
\small
\caption{Search-space map of frozen-substrate cost / proposal / retrieval interventions, ordered by intervention layer. All interventions hold the IMWM evaluated recipe constant except at the intervention's stated layer.}
\label{tab:audit-ledger}
\begin{tabular}{p{0.30\textwidth}p{0.25\textwidth}p{0.35\textwidth}}
\toprule
\textbf{Intervention} & \textbf{Layer} & \textbf{Outcome on OGBench-Cube / Push-T / Two-Room} \\
\midrule
Heteroscedastic forward-density head & Cost composition / precision head & Catastrophic regression on OGBench-Cube / Push-T \\
Inverse-scorer retrain with retrieval hard negatives & Inverse-scorer retraining & Neutral / marginal regression on fresh-grid extension \\
Alternative forward-CPC residual cost & Cost composition + precision (auxiliary-predictor residual) & $-12$~pp OGBench-Cube, $-31$~pp Push-T \\
LeWM-residual precision-weighted cost & Cost composition + precision (LeWM residual) & $-2.7$~pp OGBench-Cube, $-4.0$~pp Push-T, $-20.7$~pp Two-Room \\
1-step head replacement of 5-step rollout & Forward-primitive replacement & $-25.3$~pp OGBench-Cube, $-68$~pp Push-T \\
Stochastic-slot proposal augmentation ($q_\phi$ residual) & Stochastic-slot proposal augmentation & $+0.22$~pp aggregate, sign-test $p = 1.000$ \\
Deterministic-slot retrieval reranking & Deterministic-slot retrieval reranking & Closed before online evaluation by center-effect gate (cosine top-1 has post-refit stability that direct-$J$ reranking lacks) \\
Rollout-residual readout adapter & Adapter over rollout-residual readout & Closed before online evaluation by Appendix~\ref{app:demo-rank} demo-rank audit \\
\bottomrule
\end{tabular}
\end{table}

No intervention crosses the six layers' pre-specified closure criteria on these three tasks under the IMWM evaluated recipe; the audit is a closure result on this substrate, not an impossibility result.

\subsection{Mechanism: rollout depth carries the rank signal}
\label{app:rollout-depth}

The rank signal in $\mathrm{MSE}$ derives specifically from the 5-step rollout depth, not from any single horizon. Table~\ref{tab:rollout-mrr} reports the mean reciprocal rank (MRR) of the demonstration action under the partial-horizon residual $\|z_{\text{pred},h}(a) - \zlewm{g}\|^2$ on the same 257-candidate set as Appendix~\ref{app:demo-rank}, for each macro horizon $h = 1, \ldots, 5$.

\begin{table}[t]
\centering
\small
\caption{Demonstration-action MRR under partial-horizon residual at horizons $h = 1, \ldots, 5$, on the same audit substrate as Table~\ref{tab:demo-rank} (cosine top-1 retrieval + AR(1) candidate noise). The rank signal builds steeply across horizons; a 1-step head ($h{=}1$) is essentially uninformative on this candidate geometry.}
\label{tab:rollout-mrr}
\begin{tabular}{rrr}
\toprule
\textbf{Horizon $h$} & \textbf{OGBench-Cube MRR} & \textbf{Push-T MRR} \\
\midrule
1                    & 0.018             & 0.010              \\
2                    & 0.070             & 0.031              \\
3                    & 0.281             & 0.106              \\
4                    & 0.506             & 0.416              \\
5                    & 0.918             & 0.856              \\
\bottomrule
\end{tabular}
\end{table}

A complementary goal-directed Jacobian analysis supports, but does not formally prove, that the discrimination derives from accumulated $\partial z_{\text{pred},h} / \partial a$ composition across the five steps, not from any single horizon's residual. The 1-step head replacement of Appendix~\ref{app:audit-closure}, Table~\ref{tab:audit-ledger} directly tests this and produces catastrophic regression, consistent with the depth-is-causal interpretation.

\subsection{Where the remaining online error appears to live}
\label{app:replan-shift}

The online success rate of the IMWM evaluated recipe on OGBench-Cube / Push-T / Two-Room averages $94.9\%$ across the 12 fresh-seed cells per task (Section~\ref{sec:experiments}, Table~\ref{tab:headline}). The remaining gap appears to arise from three regimes outside the offline-energy-saturation regime audited in Appendix~\ref{app:demo-rank} on the cosine-retrieval + AR(1)-noise substrate. Because IMWM inherits the same cosine retrieval and differs only by replacing AR(1) noise with isotropic Gaussian noise, the regime list below carries interpretive weight even though we did not re-run the offline audit under the isotropic-noise geometry:
\begin{enumerate}
\item \emph{Energy-rank improvement subset.} On the $3$--$8\%$ of held-out anchors where the demonstration is not the rank-1 selection under $J$, the available offline energy headroom is small and localized; this does not bound online success. Empirically (Appendix~\ref{app:audit-closure}), eight pre-specified interventions failed to extract improvement here.
\item \emph{Demo-cloning diagnostic ceiling.} If the planner's online candidate distribution is effectively restricted to a demo-action neighborhood, the empirical success rate of exact demo playback under the same retrieval mechanism would be a diagnostic ceiling for a pure demo-cloning policy, if measured under the same retrieval protocol (we do not measure this scalar separately in this paper). It is \emph{not} a ceiling for all planners that share $J$: a non-demo action synthesized inside CEM's elite refit could succeed where the demo fails.
\item \emph{Closed-loop replan distribution shift.} At replan $k > 0$, the controller observes $(o_k, o_g)$ where $o_k$ may be off the training-anchor distribution. The retrieval bank's coverage and the cost surface's calibration may degrade. This regime is not audited by Appendix~\ref{app:demo-rank} (training-distribution anchors only).
\end{enumerate}
These are conceptual regime statements supported by Appendices~\ref{app:cem-ce}--\ref{app:rollout-depth}, not a theorem about planner-wide success rate.

\subsection{Scope of this appendix's claims}
\label{app:audit-non-claims}

This appendix does \emph{not} claim that the retained substrate cannot be improved at all. It claims only that all tested frozen-substrate interventions above the encoder failed on OGBench-Cube / Push-T / Two-Room under the IMWM evaluated recipe. It does \emph{not} claim that no learned method can beat IMWM: eight pre-specified interventions failed, but the search space of possible methods is not exhausted. It does \emph{not} claim a mechanism for the brain; the biological parallels in Appendix~\ref{app:extended-positioning} are computational-level structural analogies. It does \emph{not} apply to Reacher (which was out of scope for several interventions due to retrieval-bank availability) or to out-of-distribution generalization regimes.

\section{Component details and ablations}
\label{app:components}

This appendix gives the architectural and training specifications for the components introduced in Section~\ref{sec:method} (encoder, predictor, inverse-compatibility scorer, retrieval bank, CEM solver) and the continuous cost-weight relaxation introduced in Section~\ref{sec:method-gate}. The hard gate of Section~\ref{sec:method-gate} is the evaluated method throughout; the continuous relaxation in Appendix~\ref{app:conc-mode-defs} is a parameterization analysis that does \emph{not} replace the evaluated gate.

\subsection{LeWM encoder, projector, and predictor}
\label{app:lewm-arch}

The LeWM stack (run on the \texttt{stable-worldmodel} platform~\citep{swm}) consists of three frozen components used by IMWM: an image encoder $\phi^L : o \mapsto \zlewm{}$ producing the LeWM latent, an action-conditioned predictor $\hat M^L : (\zlewm{}, a) \mapsto \zlewm{\text{next}}$ used to roll out $\Hmacro = 5$ macro steps under candidate action chunks, and an anti-collapse projector head used during LeWM pretraining but not at evaluation time. IMWM uses these components only through the rollout cost $\mathrm{MSE}(\mathrm{rollout}_L(\zlewm{t}, a), \zlewm{g})$ in equation~\eqref{eq:hybrid-cost}; the predictor is queried but not retrained. All LeWM components are held fixed across the LeWM $h{=}5$ baseline and IMWM, so the comparison isolates IMWM's proposal- and gate-side additions.

\subsection{\texorpdfstring{Inverse-compatibility scorer $\Dinv$: InfoNCE training}{Inverse-compatibility scorer Dpsi\_inv: InfoNCE training}}
\label{app:dpsi-training}

The CMPA inverse-side stack consists of an encoder $\phi^I$ and a projector head that together produce the inverse-side latent $\zinv{}$. The inverse-compatibility scorer $\Dinv$ is a contrastive joint-compatibility score
\begin{equation*}
\Dinv(\zinv{0}, \zinv{g}, a) \;=\; \frac{q(\zinv{0}, \zinv{g}) \cdot k(a)}{\tau},
\end{equation*}
where $q$ embeds the start/goal latents into a shared key space, $k$ embeds the action chunk, and $\tau$ is a scale parameter. $\Dinv$ is trained end-to-end with $\phi^I$ on demonstration windows by InfoNCE: own-anchor demonstrations are positives, and other-anchor demonstrations in the same batch are negatives. No CEM-prior random negatives are used during training (the corresponding mixing coefficient is zero). All inverse-side components are frozen at evaluation; only the scalar score $\Dinv(\zinv{0}, \zinv{g}, a)$ is queried by IMWM's hybrid cost.

\subsection{Retrieval bank construction}
\label{app:retrieval-bank}

The retrieval bank $R = \{(k_0^{(i)}, k_g^{(i)}, a^{(i)})\}_{i=1}^{|R|}$ stores triples of cosine-normalized start/goal keys produced by the frozen intuition encoder $\phi^I$ together with the demonstration action chunks $a^{(i)} \in A^{\Hsub}$. Specifically, $\mathrm{key}^{(i)} = \mathrm{normalize}([z_0^{(i), I}; z_g^{(i), I}])$, the $\ell_2$-normalized concatenation of start and goal latents on the CMPA inverse side. In the evaluated cells $|R| = 3{,}000$ exactly (verified from \texttt{ablation\_c9\_discrete\_gate/$\ast$/eval.method.json}: \texttt{retrieval\_db\_size: 3000}, \texttt{retrieval\_score: cosine}). The bank is constructed once from the same task/seed demonstration pool used to jointly train the encoder and scorer, and is not updated at evaluation time. At evaluation, IMWM queries the bank with the current env key $\mathrm{normalize}([\zinve{t}{e}; \zinve{g}{e}])$ and returns the action chunk corresponding to the bank entry with maximum cosine similarity to the query key (top-1 nearest neighbor in the intuition encoder's latent space).

\subsection{CEM solver hyperparameters}
\label{app:cem-hyperparams}

The solver samples $S = 300$ candidate action chunks per iteration and runs $T = 30$ elite-refit iterations with top-$K = 30$ elites. The initial proposal at iteration 0 is an isotropic Gaussian $\mathcal{N}(\abase^{(e,t)}, \sigma^2 I)$ in action-chunk space with $\sigma = 1.0$ when retrieval-init is enabled; in the forward-only fallback recipe, the iteration-0 proposal is $\mathcal{N}(\mathbf{0}, \sigma^2 I)$ with the same $\sigma$. Candidate samples are not explicitly truncated to the action box at sampling time; the environment handles any out-of-bounds projection at execution. At every CEM iteration the solver forces candidate slot 0 to the current iteration mean (which is $\abase^{(e,t)}$ at iter 0 in retrieval-enabled regimes). Elite refit updates both the mean and covariance of the proposal to the elite empirical statistics; the solver returns the iter-30 elite mean as the action chunk for the replan step. Implementation note: the deployed IMWM evaluations set \texttt{ar1\_rho = 0.0} in the underlying \texttt{AR1CEMSolver}, which makes the noise sampler $\mathrm{N}(0, I)$ IID across the chunk dimensions (no temporal correlation); we therefore describe the proposal as isotropic Gaussian throughout the paper.

\subsection{Full IMWM cell procedure, diagnostic algebra, and compute footprint}
\label{app:iwm-procedure}

This subsection collects the implementation details behind \S\ref{sec:method}:
the $z$-scoring used in the hybrid cost~\eqref{eq:hybrid-cost}, the exact
cell-level diagnostic features of the reliability gate, the cell-level
procedure, and the compute footprint.

\paragraph{Per-anchor $z$-scoring.}
For values $\{x_i : i \in S\}$ over the CEM candidate set,
\begin{equation*}
\zS(x_i) = \frac{x_i - \mathrm{mean}_{j \in S}\, x_j}{\mathrm{std}_{j \in S}\, x_j + \varepsilon},
\qquad \varepsilon = 10^{-8},
\end{equation*}
with population standard deviation, so both axes of~\eqref{eq:hybrid-cost} are
unit-scale across the candidate set at every iteration before composition. The
negation $-\Dinv$ converts the intuition score to a cost.

\paragraph{Cell-level diagnostic features.}
At the first replan in cell $c$, for each environment $e$ IMWM evaluates the
intuition scorer at the retrieved chunk $\abase^{(e,0)}$ and at a fixed-size set
$\mathcal{N}_{\mathrm{neutral}}^e$ of standard-Gaussian random chunks (separate
from the CEM candidate set):
\begin{align*}
\mathrm{retrieved\_dpsi}_e & = \Dinv(\zinve{0}{e}, \zinve{g}{e}, \abase^{(e,0)}), \\
\mathrm{neutral\_q95}_e     & = q_{95}\big\{ \Dinv(\zinve{0}{e}, \zinve{g}{e}, a) : a \in \mathcal{N}_{\mathrm{neutral}}^e \big\}, \\
\mathrm{neutral\_MAD}_e     & = \mathrm{MAD}\big\{ \Dinv(\zinve{0}{e}, \zinve{g}{e}, a) : a \in \mathcal{N}_{\mathrm{neutral}}^e \big\}, \\
\rinve                   & = \frac{\mathrm{retrieved\_dpsi}_e - \mathrm{neutral\_q95}_e}{\mathrm{neutral\_MAD}_e + \varepsilon},
\qquad
\rlage                   = \text{lag-1 autocorr.\ of } \abase^{(e,0)},
\end{align*}
and the cell features are $\rinv(c) = \mathrm{median}_{e}\, \rinve$,
$\rlag(c) = \mathrm{median}_{e}\, \rlage$. Thus $\rinve$ measures how much
better the retrieved chunk scores than a high percentile of random chunks,
normalized by their dispersion; $\rlage$ is a smoothness diagnostic of the
retrieved chunk and does \emph{not} parameterize the (isotropic) proposal.

\paragraph{Cell-level procedure.}
\begin{figure}[h]
\centering
\fbox{\parbox{0.96\textwidth}{
\small
\textbf{Procedure IMWM-Cell.} \\
\textbf{Inputs:} frozen intuition encoder $\phi^I$ and scorer $\Dinv$, frozen
world model, retrieval bank $R$, neutral-chunk sampler
$\mathcal{N}_{\mathrm{neutral}}$, thresholds $(\Tinv,\Tlag)$. Per cell
$c = (\mathrm{task}, ds, ss)$ with $E$ environments.

\medskip
\textit{Cell setup} (once per $c$): the diagnostic pre-pass runs for
\emph{every} environment, including those whose cell will route to the
fallback; only \emph{planning-time} retrieval is later disabled in fallback.
\begin{enumerate}[topsep=2pt,itemsep=2pt,leftmargin=18pt]
\item For each $e$: encode $(\zinve{0}{e}, \zinve{g}{e})$ and $(\zlewme{0}{e}, \zlewme{g}{e})$; retrieve $\abase^{(e,0)}$ by a single $|R|$-way cosine lookup~\eqref{eq:retrieval-init}; compute $\rinve, \rlage$ using $\Dinv$ and $\mathcal{N}_{\mathrm{neutral}}^e$.
\item Aggregate $\rinv(c), \rlag(c)$ as medians; apply the $(\Tinv,\Tlag)$ rules of \S\ref{sec:method-gate} to fix $\mathrm{recipe}(c)$.
\end{enumerate}
\textit{Per-episode closed-loop planning} (each $e$): at each replan $t$,
re-encode; if the recipe enables retrieval, recompute $\abase^{(e,t)}$ and
initialize CEM at $\mathcal{N}(\abase^{(e,t)}, \sigma^2 I)$, else at
$\mathcal{N}(\mathbf{0}, \sigma^2 I)$, $\sigma=1$; run $T=30$ CEM iterations of
$S=300$ candidates scored by~\eqref{eq:hybrid-cost} with the recipe's weights
(slot 0 forced to the current mean; top-$K=30$ elite refit); execute the
iter-$T$ elite mean.
}}
\label{alg:iwm-cell}
\end{figure}

\paragraph{Compute footprint.}
An IMWM cell matches the world-model-only CEM sampling budget ($S = 300$,
$T = 30$, top-$K = 30$, $\Hsub = 5$). IMWM adds, over that baseline:
\textbf{(i)} one $|R|$-way cosine dot product per retrieval-enabled replan
(plus one per environment in the cell-setup pre-pass and one scorer evaluation
on the retrieved chunk); and \textbf{(ii)} when $\alphainv > 0$, one $\Dinv$
forward pass per candidate per iteration ($S \times T = 9{,}000$ scorer calls
per replan). Both are batched and small relative to the world-model rollout
($S \times T \times \Hmacro$ predictor calls per replan). The reported headline
comparison equalizes CEM sampling budget, not wall-clock; see
Appendix~\ref{app:runtime-audit}.

\subsection{Threshold-audit protocol and dev/headline overlap}
\label{app:gate-thresholds}

We froze the discrete-gate thresholds $(\Tinv, \Tlag) = (0.05, 0.3)$
before reporting the 48-cell headline grid of Table~\ref{tab:headline}.
Two distinct cell pools are relevant here; we list them explicitly to
prevent ambiguity: the routing diagnostic grid and the 48-cell headline grid.

\paragraph{Routing diagnostic grid (24 cells; Table~\ref{tab:routing}, Figure~\ref{fig:routing-plane}).}
$\{$cube, pusht, tworoom, reacher$\} \times ds \in \{3, 5, 7, 9, 11, 13\}$
at $ss = 42$ ($6 \times 4 = 24$ cells). The reliability gate's per-task
$\rinv(c), \rlag(c)$ ranges and the 24/24 routing match reported in
Table~\ref{tab:routing} are computed on this grid. All 24 routing
diagnostic cells appear in the headline grid as well; we do not claim
audit/headline disjointness on this pool.

\paragraph{Threshold-audit decision.}
$(\Tinv, \Tlag) = (0.05, 0.3)$ were chosen so that the discrete partition
of the routing diagnostic grid matches the pre-specified
recipe-cluster assignment per task (trusted hybrid for cube/pusht;
intuition-dominant for tworoom; forward-only fallback for reacher).
The thresholds were frozen \emph{before} the headline 48-cell evaluation
was run; we did not retune them after observing the headline aggregates.

The transparent statement of overlap on the routing diagnostic grid is
intentional: the gate's thresholds were calibrated on cells that also
appear in the headline aggregates.

\subsection{Per-task gate-threshold margins (one-axis perturbation)}
\label{app:gate-margins}

Table~\ref{tab:gate-margins} reports the minimum per-task margin
between the cell-level diagnostic and its frozen threshold, computed
over the 48 headline IMWM cells (the same population as
Table~\ref{tab:headline}, not the 24-cell routing diagnostic of
Table~\ref{tab:routing}). For recipe C (forward-only fallback), the
routing decision is $\rinv \leq \Tinv$ alone; $\Tlag$ does not enter
the gate's decision for cells routed to C, so we mark the $\Tlag$
column as N/A for Reacher with the observed margin in parentheses.

The aggregate one-axis stability bound is therefore $|\delta \Tinv| <
0.0376$ (over all 48 cells) and, restricted to the 36 recipe-A/B cells,
$|\delta \Tlag| < 0.3136$. These are one-axis bounds; joint
$(\delta \Tinv, \delta \Tlag)$ perturbations have smaller stability
margins.

\begin{table}[t]
\centering
\small
\caption{Per-task gate-threshold margins on the 48 headline cells.
$\min |\rinv - \Tinv|$ and $\min |\rlag - \Tlag|$ are taken over the
12 cells per task. For Reacher's recipe C, $\Tlag$ is recipe-inactive;
the lag-margin column is observed-only (N/A as a one-axis stability
bound) and shown in parentheses. Source:
\texttt{ablation\_c9\_discrete\_gate/*/eval.gate.json}.}
\label{tab:gate-margins}
\begin{tabular}{llcrr}
\toprule
\textbf{Task} & \textbf{Recipe} & $n$ & $\min |\rinv - \Tinv|$ & $\min |\rlag - \Tlag|$ \\
\midrule
OGBench-Cube      & A (trusted hybrid)        & 12 & $0.424$ & $0.316$ \\
Push-T    & A (trusted hybrid)        & 12 & $0.833$ & $0.489$ \\
Two-Room  & B (intuition-dominant)    & 12 & $0.043$ & $0.3137$ \\
Reacher   & C (forward-only fallback) & 12 & $0.0376$ & $\text{N/A}\ (0.311)$ \\
\bottomrule
\end{tabular}
\end{table}

\subsection{Approximate operational timing}
\label{app:runtime-audit}

Table~\ref{tab:runtime-audit} reports the median and interquartile
range of stored per-cell wall-clock timings for c1 and IMWM across the
48 headline cells. \textbf{This is not a controlled runtime
comparison}: the c1 stable-WM eval harness used
\texttt{solver.batch\_size${=}1$} while the IMWM eval harness used
\texttt{solver.batch\_size${=}8$}, a deliberate harness choice for
IMWM's GPU utilization. The wall-clock seconds are therefore stored
operational timings, not a matched runtime benchmark. We provide them
here for transparency only.

\begin{table}[t]
\centering
\small
\caption{Approximate logged wall-clock timings per cell (seconds),
median with IQR in brackets, over the 48 headline cells (12 per task).
\textbf{Not} a controlled runtime comparison: c1 evals use
\texttt{solver.batch\_size${=}1$}; IMWM evals use
\texttt{solver.batch\_size${=}8$}.}
\label{tab:runtime-audit}
\begin{tabular}{lccc}
\toprule
\textbf{Task} & c1 median [IQR] (s) & IMWM median [IQR] (s) & approx.\ ratio \\
\midrule
OGBench-Cube      & $331\ [323, 338]$ & $376\ [357, 396]$ & $1.14\times$ \\
Push-T    & $278\ [264, 281]$ & $306\ [288, 343]$ & $1.10\times$ \\
Two-Room  & $340\ [309, 349]$ & $333\ [299, 346]$ & $0.98\times$ \\
Reacher   & $362\ [331, 372]$ & $357\ [333, 373]$ & $0.99\times$ \\
\bottomrule
\end{tabular}
\end{table}

\subsection{Add1 reduced sentinel sweep: harness sanity table}
\label{app:add1}

Section~\ref{sec:exp-add-budget} reports the Add1 reduced sentinel
$(S, T) = (300, \{15, 30, 60\})$ slice. The c1 baselines used in Add1-A
were re-run through the IMWM eval harness rather than re-used from the
published \texttt{.stable-wm} artifacts, because the IMWM harness uses
\texttt{solver.batch\_size = 8} while the \texttt{.stable-wm} c1 pipeline
uses \texttt{solver.batch\_size = 1}. Before trusting the new c1 path,
we ran a T${=}30$ sanity reproduction on the three cells used in Add1
and required that the new c1 path match the published cells within the
$\pm 2$~pp single-episode granularity of $N = 50$.

\begin{table}[t]
\centering
\small
\caption{Add1 c1 harness sanity: published \texttt{.stable-wm} c1 success
rate (under \texttt{solver.batch\_size = 1}) vs.\ new c1 path through the
IMWM eval harness (under \texttt{solver.batch\_size = 8}) at $T = 30$.
All deltas are at the single-episode granularity ($\pm 2$~pp $= \pm 1$
episode out of $50$); the new c1 path is acceptable as the Add1 c1
reference. Source: \texttt{ablation\_c1\_lewm\_iwmharness/}.}
\label{tab:add1-sanity}
\begin{tabular}{lrrr}
\toprule
\textbf{Cell} & \texttt{.stable-wm} c1 (\%) & new c1 path (\%) & $\Delta$ (pp) \\
\midrule
cube\_ds3\_ss42      & $68.0$ & $66.0$ & $-2.0$ \\
cube\_ds5\_ss1       & $64.0$ & $64.0$ & $\phantom{+}0.0$ \\
tworoom\_ds3\_ss42   & $88.0$ & $90.0$ & $+2.0$ \\
\bottomrule
\end{tabular}
\end{table}

\subsection{\texorpdfstring{Continuous cost-weight relaxation: full definitions ($\concc$, $\alphainv(c)$, $\beff(c)$)}{Continuous cost-weight relaxation: full definitions (conc(c), alpha\_inv(c), beta\_eff(c))}}
\label{app:conc-mode-defs}

Using the per-env quantities of Section~\ref{sec:method-gate}, we define
\begin{equation}
\label{eq:conc-c}
\concc = \frac{\mathrm{mean}_{e \in c} \, \mathrm{retrieved\_dpsi}_e - \mathrm{mean}_{e \in c} \, \mathrm{neutral\_q95}_e}{\mathrm{std}_{e \in c} \, \mathrm{retrieved\_dpsi}_e + \varepsilon},
\end{equation}
where $\mathrm{retrieved\_dpsi}_e$ is the top-1 retrieved chunk's $\Dinv$ score at call 0 (as in Section~\ref{sec:method-gate}), $\mathrm{neutral\_q95}_e$ is the per-env $q_{95}$ of $\Dinv$ over standard-Gaussian neutral candidates, and the standard deviation in the denominator is over envs $e \in c$. The continuous relaxation maps $\concc$ into cost weights via two sigmoids and one compositional step:
\begin{align}
\alphainv(c) & = \sigma\!\big((\concc - \Ttrust)/\kappa\big), \label{eq:alpha-inv} \\
w_{\mathrm{focus}}(c) & = \sigma\!\big((\concc - \Tfocus)/\kappa\big), \label{eq:w-focus} \\
\beff(c)     & = (1 - \alphainv(c)) + \alphainv(c) \cdot \big(\blow + (\bhigh - \blow) \cdot w_{\mathrm{focus}}(c)\big). \label{eq:beta-eff}
\end{align}

The three global scalars $(\kappa, \blow, \bhigh) = (0.005, 0.1, 3.0)$ are fixed across all tasks. The two thresholds $(\Ttrust, \Tfocus) = (0.02285, 0.24425)$ are calibration-set functionals of a 24-cell audit: cluster midpoints between adjacent recipe-cluster medians of $\concc$. The cost-weight values used by the discrete recipes of Section~\ref{sec:method-gate} are recovered as the $\kappa \to 0$ saturating limits of equations~\eqref{eq:alpha-inv}--\eqref{eq:beta-eff}: $(\alphainv, \beff) \in \{(1, 3), (1, 0.1), (0, 1)\}$. The evaluated routing boundary itself is the two-feature gate of Section~\ref{sec:method-gate}; the proposal-side retrieval switch (retrieval-init on/off) also remains discrete in both the evaluated method and the relaxation.

\section{Search-space audit: per-intervention details}
\label{app:audit-ledger}

This appendix expands Table~\ref{tab:audit-ledger} from Appendix~\ref{app:audit-closure} with per-intervention descriptions and pre-specified closure criteria. Each entry summarizes the intervention's design, the offline gate or online smoke that closed it, and the empirical outcome on OGBench-Cube / Push-T / Two-Room.

\paragraph{Heteroscedastic forward-density head.}
\emph{Layer:} cost composition / precision head. The forward-rollout MSE term is replaced with a calibrated heteroscedastic Gaussian negative log-likelihood: $\tfrac{1}{2}(\epsilon^2 / \sigma^2) + \tfrac{1}{2} \log \sigma^2$ where a small density head trained on frozen-LeWM residuals predicts $\sigma^2$ at each step. \emph{Pre-specified gate:} validation NLL improvement on demonstration windows. \emph{Outcome:} The head improves validation NLL by 7--34 units per sample with calibrated Mahalanobis statistics, but substituting it for the $z$-scored MSE term in equation~\eqref{eq:hybrid-cost} regresses planner ranking (combined 6 wins / 7 ties / 14 losses across 27 cells). Closed on the online substitution gate; summarized in Section~\ref{sec:exp-ablations}.

\paragraph{Inverse-scorer retrain with retrieval hard negatives.}
\emph{Layer:} inverse-scorer retraining. The InfoNCE training of $\Dinv$ is augmented with retrieval-mined hard negatives: action chunks from nearby, but non-own-anchor, demonstrations in the cosine-key space used by IMWM's bank-retrieval criterion. \emph{Pre-specified gate:} per-task 2~pp tie threshold against the IMWM evaluated recipe on a fresh-grid extension. \emph{Outcome:} Neutral or marginal regression; closure criterion not crossed.

\paragraph{Alternative forward-CPC residual cost.}
\emph{Layer:} cost composition + precision (auxiliary-predictor residual). The hybrid cost is augmented with an auxiliary-predictor's forward-CPC residual as a precision-weighted term. \emph{Pre-specified gate:} online substitution on the 12-cell headline grid. \emph{Outcome:} $-12$~pp on OGBench-Cube, $-31$~pp on Push-T. Closed.

\paragraph{LeWM-residual precision-weighted cost.}
\emph{Layer:} cost composition + precision (LeWM residual). Similar to the alternative forward-CPC residual cost but with the LeWM-predictor residual itself supplying the precision weight. \emph{Pre-specified gate:} same online substitution gate. \emph{Outcome:} $-2.7$~pp on OGBench-Cube, $-4.0$~pp on Push-T, $-20.7$~pp on Two-Room. Closed.

\paragraph{1-step head replacement of 5-step rollout.}
\emph{Layer:} forward-primitive replacement. The 5-step LeWM rollout in $\mathrm{MSE}$ is replaced with a 1-step inverse-action head; the rest of the hybrid cost is unchanged. \emph{Pre-specified gate:} per-task 2~pp tie threshold. \emph{Outcome:} $-25.3$~pp on OGBench-Cube, $-68$~pp on Push-T. Catastrophic regression. Closed. This intervention is the direct empirical test of the depth-is-causal hypothesis discussed in Appendix~\ref{app:rollout-depth}.

\paragraph{Stochastic-slot proposal augmentation ($q_\phi$ residual).}
\emph{Layer:} stochastic-slot proposal augmentation. The CEM proposal is augmented with samples from a learned $q_\phi$ residual centered on the retrieval mean; the deterministic mean slot still carries the retrieved chunk. \emph{Pre-specified gate:} online substitution + sign-test on $4 \times 50$-episode dev cells. \emph{Outcome:} $+0.22$~pp aggregate, sign-test $p = 1.000$ over 450 paired episodes. Statistically neutral; the intervention did not cross the closure criterion. (CEM elite refit averages out small iter-0 proposal shifts of this magnitude.)

\paragraph{Deterministic-slot retrieval reranking.}
\emph{Layer:} deterministic-slot retrieval reranking. The retrieved chunk in candidate slot 0 is replaced by a direct-$J$ argmin selection over the top-$K$ cosine-retrieved chunks (a ``best-of-$S$'' reranking gate). \emph{Pre-specified gate:} a center-effect closure test on OGBench-Cube / Push-T / Two-Room dev anchors comparing post-refit best-cost stability of cosine top-1 versus direct-$J$ reranking. \emph{Outcome:} Cosine top-1 has refit-stability ($0.0$--$0.26 z$ on post-refit best-cost) that direct-$J$ reranking lacks; the gate identifies cosine top-1 as the correct anchor under CEM refit. Closed before online evaluation.

\paragraph{Rollout-residual readout adapter.}
\emph{Layer:} adapter over rollout-residual readout. A small adapter is trained on the frozen encoder + frozen predictor stack to read out an alternative latent cost from the rollout residual. \emph{Pre-specified gate:} demo-rank audit (Appendix~\ref{app:demo-rank}) verifying that the alternative readout would have rank headroom over the retained hybrid cost. \emph{Outcome:} The audit shows $\le 2$ rank-headroom on 92.0--97.0\% of held-out anchors (Appendix~\ref{app:demo-rank}, Table~\ref{tab:demo-rank}); the alternative readout cannot improve over a near-saturated baseline on the planner's cost axis. Closed before online evaluation.

\paragraph{Audit closure summary.}
The eight interventions span six conceptually distinct layers: (i)~cost composition / precision head, (ii)~inverse-scorer retraining, (iii)~forward-primitive replacement, (iv)~stochastic-slot proposal augmentation, (v)~deterministic-slot retrieval reranking, (vi)~adapter over rollout-residual readout. Of these, six were closed by online substitution under the evaluated recipe and two were closed before online evaluation by pre-specified offline gates. The audit is a closure result on the substrate of IMWM's evaluated recipe and the three retrieval-task benchmarks; it does not claim that no method outside this audit could improve over IMWM, and it does not apply to Reacher (where retrieval is uninformative and the fallback recipe routes the cell to forward-only LeWM/CEM).

\section{Reproducibility: anonymized artifact contents and release plan}
\label{app:reproducibility}

This appendix describes the contents of the artifact bundle and the release plan. All filesystem paths and internal identifiers have been anonymized; the bundle ships with relative paths only.

\paragraph{Code.}
The artifact bundle contains: (i)~the IMWM planner implementation (proposal, hybrid cost, reliability gate); (ii)~the LeWM stack used as the world-model substrate (publicly released by the LeWM authors; we use the released checkpoints verbatim); (iii)~the intuition-model (inverse-side) training code for $\Dinv$; (iv)~evaluation scripts that reproduce the 12-cell paired evaluation and the 24-cell diagnostic grid; (v)~plotting scripts for Figures~\ref{fig:paired-scatter}--\ref{fig:routing-plane} and the appendix tables. Full component specifications are in Appendices~\ref{app:components}, \ref{app:cem-hyperparams}, and~\ref{app:iwm-procedure}.

\paragraph{Pretrained checkpoints.}
The bundle includes the frozen LeWM encoder, predictor, and projector checkpoints (referenced from the public LeWM release) and the frozen intuition-model inverse-side encoder/projector and $\Dinv$ checkpoint trained on the demonstration windows. No additional training is required at evaluation time; all components are loaded as released.

\paragraph{Retrieval bank.}
The bundle includes the constructed retrieval bank $R$ as a serialized file containing the $(z_0^{(i), I}, z_g^{(i), I}, a^{(i)})$ triples. Bank construction is one-shot and the bank is not updated at evaluation; reconstructing the bank from raw demonstrations is also documented in the release.

\paragraph{Evaluation grid and seeds.}
The 12-cell-per-task headline grid is fully specified in Section~\ref{sec:exp-setup}: $(ds, ss) \in \{3, 5, 7\} \times \{42, 1, 2\}$ plus $\{(9, 42), (11, 42), (13, 42)\}$ per task. Per-cell starts (the random episode indices) are recorded in the artifact bundle so that paired evaluation against LeWM $h{=}5$ can be reproduced exactly. The 24-cell diagnostic grid is documented similarly.

\paragraph{Evaluation logs.}
The bundle includes per-cell evaluation logs with the per-episode success/failure outcomes, the per-cell gate diagnostics $\rinv(c)$ and $\rlag(c)$, and the selected $\mathrm{recipe}(c)$. It also includes per-episode method sidecars that record the active runtime flags. The IMWM-active sidecar fields are \texttt{alpha\_inv}, \texttt{beta\_eff}, \texttt{retrieval\_init\_on}, \texttt{proposal\_type=isotropic}, \texttt{sigma=1.0}, \texttt{retrieval\_score=cosine}, \texttt{gate\_retrieval\_score\_mode=cosine}, \texttt{planning\_retrieval\_score\_mode=cosine} (recipes A, B) or \texttt{planning\_retrieval\_score\_mode=none} (recipe C), and \texttt{retrieval\_db\_size=3000}. Legacy sidecars may also carry a $\rho$ (\texttt{ar1\_rho}) field that records \texttt{0.0} for all IMWM evaluations; it is a historical pass-through from the underlying AR(1)-capable solver class and is not part of the IMWM method definition. These logs allow independent re-derivation of Tables~\ref{tab:headline} and~\ref{tab:routing}.

\paragraph{Release plan.}
We submit the anonymized code, scripts, retrieval bank, and evaluation logs as supplementary material. The frozen LeWM checkpoints are released at the LeWM project's public location (cited in the main paper); the intuition-model inverse-side checkpoint is included in the artifact bundle. After acceptance, the de-anonymized release will mirror the supplementary bundle on a public repository with the camera-ready citation. Local filesystem paths from the development environment have been replaced with relative artifact-bundle paths throughout the released code.

\paragraph{What this bundle does not include.}
The bundle does not include training trajectories or raw demonstration episodes beyond what is needed to reconstruct the retrieval bank and to re-evaluate the 12-cell grid. The bundle does not include training code for the LeWM stack, which is released by the LeWM authors, or for the additional intervention variants tested in Appendix~\ref{app:audit-ledger}. Those interventions are documented in the appendix, but their training artifacts are not part of this release.

\section{Extended positioning and related work}
\label{app:extended-positioning}

This appendix expands the positioning of \S\ref{sec:motivation}: the
cognitive-science motifs that inspired the intuition--world-model pairing
(\S\ref{sec:motivation-cogsci}), the full ledger of AI-side functional
analogues (\S\ref{sec:motivation-ai}), and the detailed differentiation of
IMWM from each adjacent line (\S\ref{sec:motivation-gap}).

\subsection{Cognitive-science and neuroscience analogues at the level of functional motifs}
\label{sec:motivation-cogsci}

Biological decision-making is not well-modeled by a single forward simulator.
We summarize four functional motifs from the cognitive-neuroscience literature
that are relevant for the intuition-model framing. We are explicit that these
are functional motifs in biological decision systems; we are not claiming the
brain implements any of IMWM's specific modules.

\paragraph{Arbitration between control systems.}
Decades of work argue that the brain coordinates multiple decision-making
systems whose relative influence is regulated rather than fixed.
\citet{daw05} formalize arbitration between a prefrontal-associated
model-based system and a dorsolateral-striatum-associated model-free system
by the relative uncertainty of each system's value estimates; the more
reliable system dominates at each decision.
\citet{lengyel07} extend this picture by proposing an additional
\emph{episodic} control system, grounded in hippocampal/MTL function, that
becomes useful when forward-simulation value estimates are noisy and habit
estimates have not converged. The shared functional motif is that biological
decision systems do not run a single planner at full strength; they
\emph{gate} how much each system contributes.

\paragraph{Memory access and replay for decision support.}
\citet{mattar18} propose a normative theory of which past memories are
accessed at each moment to optimize future decisions, and use it to predict
the structure of hippocampal replay events as utility-prioritized retrieval.
\citet{pfeiffer13} report that brief ($\sim$100\,ms) goal-biased forward
sequences in CA1 place cells precede goal-directed navigation in rats,
beginning at the current location and tending to terminate near the
remembered goal even for novel start-goal combinations.
\citet{stachenfeld17} model hippocampal place cells as encoding
policy-dependent \emph{predictive} maps that link current locations to
expected future locations under the current policy.
\citet{tolman48} is the historical anchor for the cognitive-map idea that
informs this lineage. The shared functional motif is that biological agents
use \emph{retrieved} or \emph{predictive} structure over experience as a
decision-time aid, not only as a value-update mechanism after the fact.

\paragraph{Hierarchical control and chunked actions.}
\citet{dezfouli13} present human behavioral evidence that goal-directed and
habitual action control are hierarchically organized: the goal-directed system
can select chunked action sequences that are then executed efficiently by
the habit system. The shared motif is that decision-making operates over
temporally extended action chunks, not only per-step actions.

\paragraph{Functional, not implementational.}
Together, these motifs do not by themselves prescribe a system design; they
establish that biological decision systems exhibit arbitration, predictive
maps, memory-driven retrieval, episodic-style control, and chunked action
selection. IMWM is best read as an engineering instantiation of analogous
functions for pixel-based goal-reaching, not as a claim about how brains
implement these functions.

\subsection{AI-side functional analogues}
\label{sec:motivation-ai}

We now turn to AI work that already realizes pieces of the intuition-model
role. We group it by mechanism rather than by chronology, and we explicitly
identify which analogues are close enough to require careful differentiation
in \S\ref{sec:motivation-gap}.

\paragraph{Episodic and memory-augmented control.}
Model-Free Episodic Control~\citep{blundell16mfec} stores high-return
state--action estimates in a non-parametric memory and acts via
nearest-neighbor lookup, learning faster than parametric deep RL on hard
Atari domains. Neural Episodic Control~\citep{pritzel17nec} extends this to
a differentiable neural dictionary over slowly-changing state embeddings
with rapidly-updated value estimates, again improving early-learning sample
efficiency. Imagination-Augmented Agents~\citep{racaniere17i2a} combine
model-free and model-based signals by feeding learned-model rollouts into a
policy network as additional context.

\paragraph{Decision-time retrieval and imitation.}
Retrieval-Augmented RL~\citep{goyal22rarl} trains a learned retrieval module
to surface elements of a dataset of past experiences (own experience,
expert demonstrations, or other sources) and condition agent behavior on
them at decision time. The recent Retrieval-Augmented Decision
Transformer~\citep{schmied24radt} introduces an external memory storing
sub-trajectories of past experience and retrieving only the relevant
sub-trajectories for the current state, removing the need to keep entire
episodes in context. IMPLANT~\citep{qi22implant} retains both a behavior-cloning imitation policy and a learned reward model at decision time and uses
test-time planning to correct compounding errors of the imitation policy
under dynamics perturbations. Earlier work on imitation-shaped
planners~\citep{choudhury18idplanning} trains planning policies by imitating
a clairvoyant oracle to bias search behavior under partial observability.

\paragraph{Learned priors over actions, skills, and trajectories.}
A complementary line treats prior data as a source of action-, skill-, or
trajectory-level priors that bias downstream agents.
SPiRL~\citep{pertsch20spirl} learns a latent embedding space of temporally
extended skills together with a skill prior from offline experience and
uses the prior to regularize maximum-entropy RL; SkiMo~\citep{shi22skimo}
plans in a skill latent space using a skill dynamics model rather than a
per-step dynamics model. \citet{tirumala22bp} develop behavior priors as
probabilistic trajectory models that capture shared movement and
interaction patterns across related tasks; \citet{galashov19infoasym}
analyze the limiting case where an information-restricted default policy
forces task-general reusable behavior under KL regularization.
PARROT~\citep{singh21parrot} pre-trains a behavioral prior from successful
trials across diverse prior tasks and uses it to accelerate downstream
RL, including on pixel-based manipulation.
Diffuser~\citep{janner22diffuser} folds offline trajectory generation and
planning into a single diffusion probabilistic model that plans by
denoising trajectories.

\paragraph{Value-aware and planning-aware model learning.}
A separate line argues that the world-model objective should reflect the
downstream value/planning objective rather than only observation
prediction. The Predictron~\citep{silver17predictron} trains an abstract
Markov-reward-process model end-to-end so that multi-step rollouts of
accumulated values approximate the true value function. Value Prediction
Networks~\citep{oh17vpn} integrate model-free and model-based RL by learning
abstract dynamics that predict option-conditional future values rather than
future observations.
Value-aware loss functions~\citep{farahmand17vaml} make the same argument
theoretically with finite-sample analysis. \citet{lambert20_objective_mismatch}
name and empirically characterize objective mismatch in model-based RL,
showing one-step likelihood is not always correlated with control
performance. \citet{hamrick21_role_of_planning} ablate
planning components in a MuZero-style agent and find that planning's
benefits vary by task and algorithmic setting.

\paragraph{CEM/MPC optimizer engineering and modern latent MPC.}
The CEM-based MPC pipeline that IMWM augments has its own line of work.
PETS~\citep{pets} combines uncertainty-aware deep dynamics ensembles with
sampling-based propagation under CEM and remains a canonical CEM-MPC +
learned-dynamics baseline. PDDM~\citep{nagabandi20pddm} couples deep dynamics
models with MPC for real-world contact-rich dexterous manipulation. The
closest optimizer-side component prior to IMWM's proposal distribution is
iCEM~\citep{pinneri20icem}, which improves CEM through temporally
correlated actions and shifted/best-elite memory across replans. CEM with
interleaved gradient steps~\citep{bharadhwaj20cemgd} and Differentiable
MPC~\citep{amos18diffmpc} represent alternative optimizer choices. The
strongest \emph{modern} latent-MPC baseline is TD-MPC~\citep{tdmpc}, which
adds a task-oriented latent dynamics model and a learned terminal value
function trained jointly by temporal difference.

\paragraph{Contrastive and inverse-dynamics representation learning.}
The training objective of IMWM's intuition model is in the lineage of
contrastive representation learning. Contrastive Predictive
Coding~\citep{vandenoord18cpc} introduces an InfoNCE-style negative-sampling
objective for predicting future samples in latent space. The Intrinsic
Curiosity Module~\citep{pathak17icm} uses an inverse-dynamics objective so
that learned features capture aspects of the environment the agent can act
on while ignoring uncontrollable factors, a functional motif directly relevant to action-relevant compatibility scoring. R3M~\citep{nair22r3m}
applies time-contrastive learning over large-scale human video to obtain a
frozen visual representation that accelerates downstream manipulation.
Contrastive RL~\citep{eysenbach22contrastiverl} shows that contrastive
representation learning applied to action-labelled trajectories yields
inner-product representations corresponding to goal-conditioned value
functions; this is the most directly relevant precedent for a
contrastive-scored start-goal-action signal.

\subsection{The novelty gap IMWM fills}
\label{sec:motivation-gap}

Read across the above body of work, the intuition-model role is not new in
isolation; what is new is the specific combination IMWM instantiates inside a
finite-budget CEM planner over a frozen latent world model. We summarize the
required differentiation by group.

\paragraph{Episodic / memory-augmented control vs.\ planning-time action-chunk
retrieval.}
MFEC and NEC use episodic retrieval to estimate \emph{value} for action
selection; the retrieved object is a value estimate, the use is
$\arg\max$-over-actions. IMWM retrieves \emph{action chunks} from
demonstrations indexed by contrastive joint-compatibility latents, uses them
to \emph{initialize} the CEM proposal distribution, and composes a hybrid
cost over a frozen latent forward predictor. There is no value memory and no
$\arg\max$-over-stored-Q-values step.

\paragraph{Retrieval / imitation at decision time vs.\ proposal-init + hybrid cost + gate.}
RA-RL conditions a value/policy network on a learned retrieval over an
experience dataset; RA-DT retrieves sub-trajectories into a Transformer
context window; IMPLANT plans at test time with a BC policy and a learned
reward model. IMWM instead retrieves action chunks as the \emph{initialization
mean} of a CEM proposal under a frozen latent world model, scores
candidates by a hybrid cost composed from the forward latent-MSE and a
contrastive intuition score under per-cell z-scoring, and gates reliance on
this signal by cell-level reliability diagnostics. IMWM has no task reward,
no value function, and no in-context Transformer policy at planning time.

\paragraph{Behavior / skill / trajectory priors vs.\ planning-time compatibility
score.}
SPiRL, Behavior Priors, PARROT, and SkiMo learn priors over actions, skills,
or trajectories from prior data and use them to bias downstream RL or
planning. Diffuser learns a diffusion trajectory model that doubles as a
planner. IMWM instead trains a goal-conditioned compatibility score from
demonstrations and uses it at decision time to initialize and score CEM
candidates under a frozen latent world model, with per-cell reliability
gating. The intuition model is not used to fine-tune a policy or to
generate trajectories; it is consumed inside CEM as a cost term.

\paragraph{CEM/MPC optimizer engineering vs.\ retrieval-derived proposal mean.}
iCEM improves CEM via temporally correlated (colored-noise) actions and
shifted/best-elite memory across replans. IMWM takes a different route: it
\emph{replaces the proposal mean} with the bank chunk selected by
cosine nearest-neighbor lookup in the intuition encoder's latent space
and samples isotropically (with no temporal correlation) around that
mean. The optimizer-side innovation is on a different axis from iCEM's,
and IMWM is not built on top of iCEM's colored-noise mechanism.
TD-MPC is the closest \emph{modern} latent-MPC baseline class: it learns a
task-oriented latent dynamics model and a terminal value function jointly
by TD. IMWM keeps a frozen forward-prediction-trained world model with no reward and
no learned terminal value at planning time. It composes the hybrid cost above and adds the reliability gate.

\paragraph{Contrastive goal-conditioned scoring vs.\ compatibility cost inside
finite-budget CEM.}
Contrastive RL connects action-labeled trajectory contrastive learning to
goal-conditioned value functions; the contrastive scorer there is
interpreted as a value function. IMWM instead uses an InfoNCE-style
start-goal-action scorer as a planning-time compatibility cost term inside
finite-budget CEM, composed with the world model's terminal latent-MSE and gated by
reliability diagnostics; it is not trained or interpreted as a terminal
value function.

\paragraph{Summary of the gap.}
Our literature review did not identify prior work that combines this specific
set of design choices: \emph{(a)} frozen latent world-model planning with terminal
latent-MSE as one cost component, \emph{(b)} a separately-trained
contrastive start-goal-action compatibility scorer as a second cost
component, \emph{(c)} cosine nearest-neighbor retrieval of a demonstration action
chunk in the intuition encoder's latent space, used as the
initialization mean of an isotropic-Gaussian CEM proposal, and
\emph{(d)} a per-cell reliability gate over cell-level diagnostics that
selects among discrete recipes for the cost weights and the proposal
distribution. The diagnosis in \S\ref{sec:diagnosis} identifies exactly the
volume-bottleneck axis that this combination is built to attack;
\S\ref{sec:method} defines IMWM precisely.

\section{Additional experimental results}
\label{app:extra-results}

This appendix collects the per-cell results, full diagnostic tables, and
control experiments summarized in \S\ref{sec:experiments}.

\subsection{Per-cell paired results}
\label{app:extra-paired}

Figure~\ref{fig:paired-scatter} shows the per-cell paired success scatter
underlying the Table~\ref{tab:headline} totals. OGBench-Cube is the largest and most
consistent gain (IMWM improves every cell, with lower spread, SD $1.9$ vs.\
$5.7$); Two-Room improves every cell ($+11.5$~pp mean). Push-T is a smaller,
high-baseline win (8 wins, 1 tie, 3 losses; mean $+2.8$~pp, just above the $\pm 2$~pp tie band). On Reacher the world-model-only baseline is already
competitive and IMWM routes to the forward-only fallback recipe, leaving a $+0.7$~pp near-tie (run-to-run CEM variation).

\begin{figure}[h]
\centering
\includegraphics[width=0.85\textwidth]{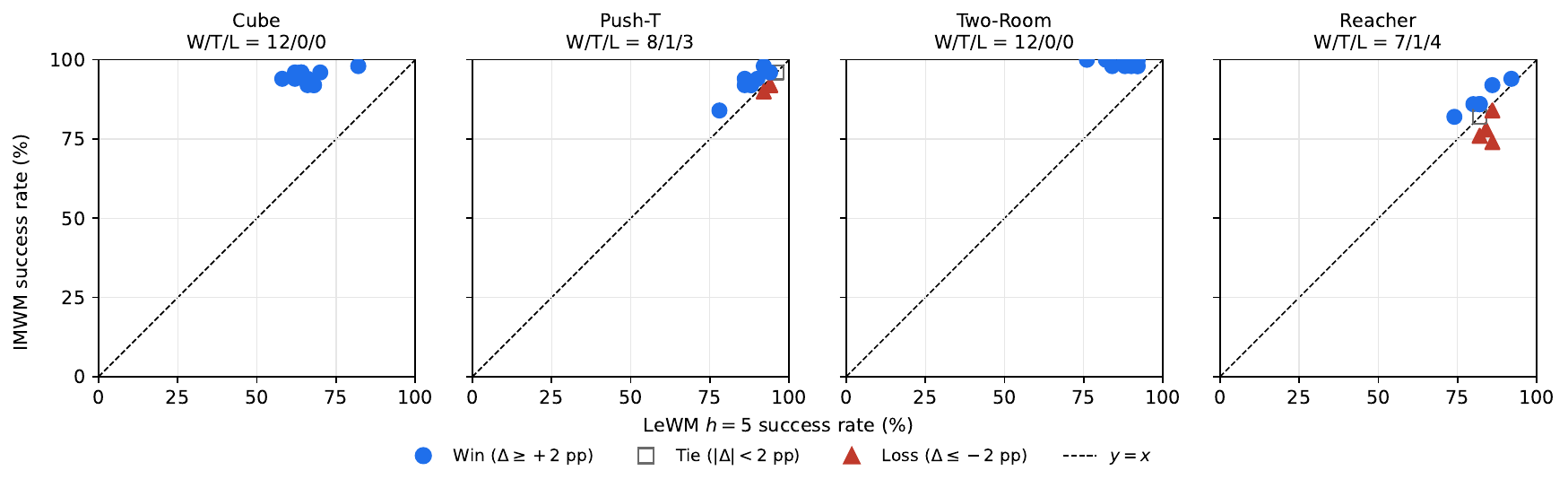}
\caption{Per-cell paired success scatter (IMWM vs.\ the world-model-only
baseline), one panel per task. Each point is one of $12$ fresh-seed cells;
color codes W/T/L under the strict $|\Delta| < 2$~pp tie threshold. Generated
from the canonical $300/30/30$ evaluation data underlying
Table~\ref{tab:headline}.}
\label{fig:paired-scatter}
\end{figure}

\subsection{Gate diagnostic ranges}
\label{app:extra-routing}

Table~\ref{tab:routing} reports the per-task ranges of the two gate diagnostics
across the 24-cell diagnostic grid, with the routed recipe under the frozen
thresholds. No diagnostic cell straddles a threshold boundary, which is the
source of the $24/24$ routing claim in \S\ref{sec:exp-routing}.

\begin{table}[h]
\centering
\small
\caption{Per-task ranges of the gate diagnostics across the 24-cell diagnostic
grid (4 tasks $\times$ $ds \in \{3,5,7,9,11,13\}$ at $ss{=}42$), with the routed
recipe under frozen thresholds $(\Tinv, \Tlag) = (0.05, 0.3)$. Source: per-cell
gate sidecars from the deployed planner.}
\label{tab:routing}
\begin{tabular}{lccc}
\toprule
\textbf{Task} & $\rinv$ range & $\rlag$ range & Routed recipe \\
\midrule
OGBench-Cube     & $[+0.47, +0.57]$ & $[+0.62, +0.64]$ & trusted hybrid ($\beff{=}3$) \\
Push-T   & $[+0.88, +1.13]$ & $[+0.79, +0.83]$ & trusted hybrid ($\beff{=}3$) \\
Two-Room & $[+0.09, +0.18]$ & $[-0.08, -0.01]$ & intuition-dominant ($\beff{=}0.1$) \\
Reacher  & $[-0.05, +0.01]$ & $[-0.07, -0.01]$ & forward-only fallback \\
\bottomrule
\end{tabular}
\end{table}

\subsection{Roles of the two models: full table}
\label{app:extra-diag2}

This expands \S\ref{sec:exp-diag-intuition-elite}. The three arms are
\emph{intuition-only} ($\alphainv{=}1, \beff{=}0$; retrieval-init on, isotropic
CEM, no world-model rollout), \emph{world-model-only} (retrieval-init on,
isotropic CEM, no intuition term, i.e.\ legacy condition \texttt{c2} on
Two-Room/Push-T/OGBench-Cube and the \texttt{c1}/fallback arm on Reacher because
the gate routes Reacher to recipe C), and \emph{full IMWM} (legacy \texttt{c9}, the
deployed gate-driven planner). The intuition-only arm is a new 48-cell run; the
other arms reuse existing results. Paired-cell bootstrap CIs are over the 12
cells per task ($B{=}10{,}000$).

\begin{table}[h]
\centering
\small
\caption{Intuition-only vs.\ world-model-only vs.\ full IMWM on the headline
48-cell grid (success rate, \%; $n{=}12$/task). Gate-faithful semantics: the
world-model-only column is \texttt{c2} on Two-Room/Push-T/OGBench-Cube and the
\texttt{c1}/fallback on Reacher. Intuition-only on Reacher is a counterfactual
stress test, not a deployed configuration.}
\label{tab:diag2-full}
\resizebox{\textwidth}{!}{%
\begin{tabular}{lcccrr}
\toprule
\textbf{Task} & intuition-only & WM-only (src) & full IMWM & $\bar{\Delta}$ IMWM$-$intuition & $\bar{\Delta}$ IMWM$-$WM \\
\midrule
Two-Room     & $98.3 \pm 1.4$ & $98.2 \pm 2.6$ (c2)    & $99.2 \pm 1.0$ & $+0.8\ [+0.2, +1.5]$    & $+1.0\ [-0.3, +2.7]$ \\
Reacher      & $53.2 \pm 7.5$ & $83.2 \pm 4.1$ (c1/fb) & $83.8 \pm 5.7$ & $+30.7\ [+25.3, +36.7]$ & $+0.7\ [-2.8, +3.7]$ \\
Push-T       & $60.0 \pm 6.5$ & $91.5 \pm 3.8$ (c2)    & $92.7 \pm 3.4$ & $+32.7\ [+28.3, +36.2]$ & $+1.2\ [-1.0, +3.5]$ \\
OGBench-Cube & $83.8 \pm 5.4$ & $88.3 \pm 3.0$ (c2)    & $94.7 \pm 1.9$ & $+10.8\ [+8.0, +14.2]$  & $+6.3\ [+4.2, +8.3]$ \\
\bottomrule
\end{tabular}}
\end{table}

Full IMWM materially exceeds intuition-only on OGBench-Cube, Push-T, and Reacher, while
Two-Room is practically saturated by the intuition-only score; this rules out
the intuition score as a uniform standalone replacement for the world model.
OGBench-Cube is the clearest case that intuition adds value beyond the world model
($+6.3$~pp, $95\%$ CI excludes zero); on Push-T and Two-Room the
world-model-only arm is already near-saturated so the hybrid is statistically
tied, and on Reacher the gate routes to the forward-only fallback recipe,
leaving a near-tie (the $+0.7$~pp is run-to-run CEM variation).

\subsection{CEM budget scaling (reduced sentinel sweep)}
\label{sec:exp-add-budget}

We test whether IMWM's advantage is a simple CEM-iteration-budget artifact by
fixing $S = 300$ and varying $T \in \{15, 30, 60\}$. \textbf{Add1-A}
(Table~\ref{tab:add1-a}) compares the world-model-only baseline against full
IMWM on two sentinel cells; \textbf{Add1-B} (Table~\ref{tab:add1-b}) tracks the
OGBench-Cube complementarity gap (full IMWM vs.\ the world-model-only retrieval arm) on
the cell with the largest such gap. The baseline was re-run at $T{=}30$ through
the IMWM eval harness as a sanity check and matched the published cells within
the $\pm 2$~pp single-episode granularity (Appendix~\ref{app:add1}).

\begin{table}[h]
\centering
\small
\caption{Add1-A: world-model-only baseline vs.\ IMWM at $S{=}300$,
$T \in \{15,30,60\}$ on two sentinel cells (success rate, \%).}
\label{tab:add1-a}
\begin{tabular}{llrrrr}
\toprule
\textbf{Cell} & \textbf{Arm} & $T{=}15$ & $T{=}30$ & $T{=}60$ & $\Delta$ (IMWM$-$base) \\
\midrule
cube\_ds3\_ss42    & baseline & $70.0$  & $66.0$  & $68.0$  & N/A \\
                   & IMWM      & $94.0$  & $92.0$  & $96.0$  & $+24$ / $+26$ / $+28$ \\
\midrule
tworoom\_ds3\_ss42 & baseline & $94.0$  & $90.0$  & $90.0$  & N/A \\
                   & IMWM      & $100.0$ & $100.0$ & $100.0$ & $\phantom{+0}+6$ / $+10$ / $+10$ \\
\bottomrule
\end{tabular}
\end{table}

\begin{table}[h]
\centering
\small
\caption{Add1-B: OGBench-Cube complementarity at varying $T$ on \texttt{cube\_ds5\_ss1}
(success rate, \%). The world-model-only retrieval arm is \texttt{c2}.}
\label{tab:add1-b}
\begin{tabular}{lrrrr}
\toprule
\textbf{Arm} & $T{=}15$ & $T{=}30$ & $T{=}60$ & gap behavior \\
\midrule
baseline (no retrieval)        & $62.0$ & $64.0$ & $66.0$ & N/A \\
WM-only $+$ retrieval arm & $82.0$ & $84.0$ & $84.0$ & N/A \\
full IMWM                       & $90.0$ & $96.0$ & $96.0$ & N/A \\
\midrule
$\Delta$ (IMWM $-$ WM$+$retr.)   & $+8$   & $+12$  & $+12$  & persists \\
\bottomrule
\end{tabular}
\end{table}

The $\text{baseline}\to\text{IMWM}$ gap stabilizes at $+24$--$28$~pp on OGBench-Cube and
$+10$~pp on Two-Room from $T \geq 30$ and does not close at $T{=}60$; the OGBench-Cube
complementarity gap is $+8$ at $T{=}15$ and $+12$ at $T \in \{30,60\}$. The sentinel sweep therefore argues against a simple CEM-iteration-budget explanation.
\emph{Scope:} this varies $T$ only at fixed $S{=}300$; the full $(S,T)$ grid is
left to future work (\S\ref{sec:limitations-future}).

\subsection{Candidate-rank vs.\ success distribution (oracle diagnostic)}
\label{sec:exp-diag-rank-success}

Figure~\ref{fig:diag1-rank-histogram} reports the rank (by the oracle-dynamics
terminal latent-MSE cost) of the first goal-reaching candidate among the
$S{=}300$ CEM candidates, per replan, on the oracle-dynamics diagnostic logs.
On the full 12-cell OGBench-Cube grid ($1{,}200$ records), $77.0\%$ of replans
contain a goal-reaching candidate and $95.6\%$ of those have rank-$0$; on the
full 12-cell Two-Room grid ($1{,}200$ records), $85.2\%$ contain a candidate and
$99.6\%$ of those have rank-$0$; on three auxiliary Reacher cells ($300$
records), $100\%$ of records with a goal-reaching candidate have rank-$0$ (median
rank $0$ on all three tasks). Thus,
when successes exist in the population, the latent objective ranks one first;
together with \S\ref{sec:diagnosis-oracle} this localizes the failure to
candidate coverage, not ranking. The two on-disk Push-T oracle files are
quarantined (Appendix~\ref{app:pusht-quarantine}) and excluded.

\begin{figure}[h]
\centering
\includegraphics[width=0.85\textwidth]{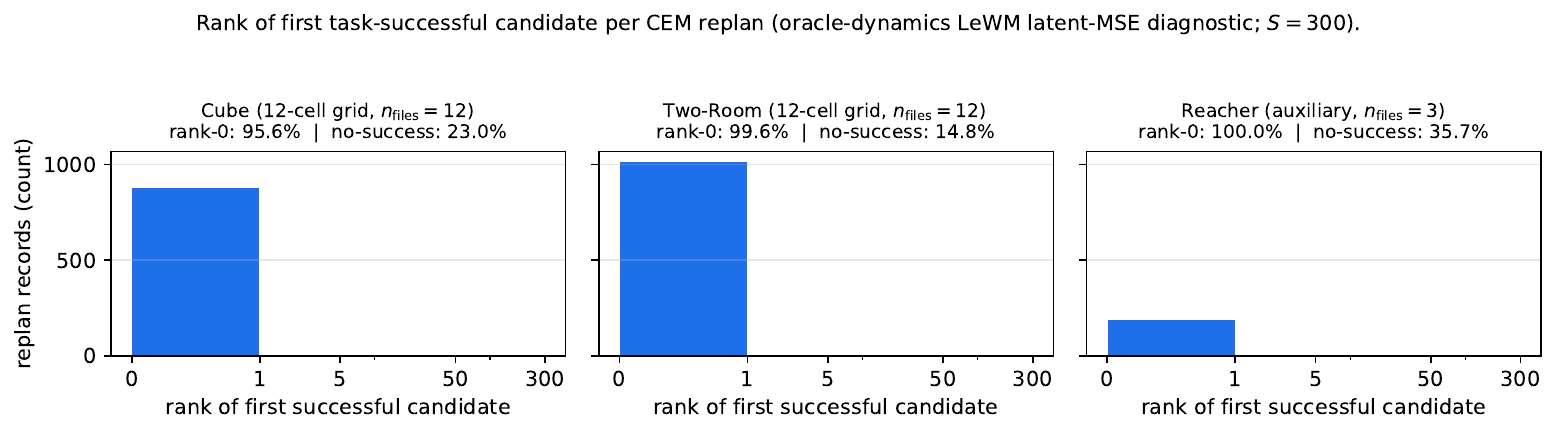}
\caption{Rank of the first goal-reaching candidate per CEM replan under the
oracle-dynamics terminal latent-MSE cost (\S\ref{sec:diagnosis-oracle}),
$S{=}300$. OGBench-Cube and Two-Room full 12-cell grids, Reacher auxiliary
(3 cells); Push-T excluded as quarantined
(Appendix~\ref{app:pusht-quarantine}).}
\label{fig:diag1-rank-histogram}
\end{figure}

\end{document}